\documentclass[pdflatex,sn-mathphys]{sn-jnl}
\usepackage{amsmath}
\usepackage{amssymb}
\usepackage{graphicx}
\usepackage{overpic}
\usepackage{caption}
\usepackage{subcaption}
\usepackage{mathrsfs}
\usepackage{algorithm}
\usepackage{cleveref}

\algrenewcommand\alglinenumber[1]{\footnotesize #1.}

\jyear{2022}

\ifpdf
\hypersetup{
  pdftitle={Staying the course},
  pdfauthor={J. M. Bello-Rivas, A. Georgiou, J. Guckenheimer, I. G. Kevrekidis}
}
\fi

\DeclareMathOperator{\Div}{div}
\DeclareMathOperator{\grad}{grad}

\DeclareMathOperator{\Span}{span}
\DeclareMathOperator{\sign}{sign}

\DeclareMathOperator{\adj}{adj}
\DeclareMathOperator{\arcsech}{arcsech}

\renewcommand{\S}{\mathbb{S}}

\renewcommand{\epsilon}{\varepsilon}
\renewcommand{\R}{\mathbb{R}}

\newcommand{\X}{\mathfrak{X}}
\renewcommand{\e}{\mathrm{e}}
\renewcommand{\d}{\mathrm{d}}

\theoremstyle{thmstyleone}
\newtheorem{theorem}{Theorem}[section]
\newtheorem{proposition}[theorem]{Proposition}%

\theoremstyle{thmstyletwo}%
\newtheorem{example}{Example}%
\newtheorem{remark}{Remark}%

\theoremstyle{thmstylethree}%
\newtheorem{definition}{Definition}%

\raggedbottom

\begin{document}

\title[Staying the course]{Staying the course:
Iteratively locating equilibria of dynamical systems
on Riemannian manifolds defined by point-clouds\thanks{Submitted to the editors \today}}

\author*[1]{\fnm{Juan M.} \sur{Bello-Rivas}}\email{jmbr@jhu.edu}
\author[1]{\fnm{Anastasia} \sur{Georgiou}}\email{ageorgi3@jhu.edu}
\author[2]{\fnm{John} \sur{Guckenheimer}}\email{jmg16@cornell.edu}
\author*[1]{\fnm{Ioannis G.} \sur{Kevrekidis}}\email{yannisk@jhu.edu}

\affil*[1]{\orgdiv{Department of Chemical and Biomolecular Engineering}, \orgname{Whiting School of Engineering, Johns Hopkins University}, \orgaddress{\street{3400 North Charles Street}, \city{Baltimore}, \postcode{21218}, \state{MD}, \country{USA}}}

\affil[2]{\orgdiv{Department of Mathematics}, \orgname{Cornell University}, \orgaddress{
    \city{Ithaca}, \postcode{14853}, \state{New York}, \country{USA}}}

\abstract{
  We introduce a method to successively locate equilibria (steady states) of dynamical systems on Riemannian manifolds.
  The manifolds need not be characterized by an \emph{a priori} known atlas or by the zeros of a smooth map.
  Instead, they can be defined by point-clouds and sampled as needed through an iterative process.
  If the manifold is an Euclidean space, our method follows \emph{isoclines}, curves along which the direction of the vector field $X$ is constant.
  For a generic vector field $X$, isoclines are smooth curves and every equilibrium lies on isoclines.
  We generalize the definition of isoclines to Riemannian manifolds through the use of parallel transport: generalized isoclines are curves along which the directions of $X$ are parallel transports of each other.
  As in the Euclidean case, generalized isoclines of generic vector fields $X$ are smooth curves that connect equilibria of $X$.
  Our algorithm can be regarded as an extension of the method of Newton trajectories to the manifold setting when the manifold is unknown.
  \\

  This work is motivated by computational statistical mechanics, specifically high dimensional (stochastic) differential equations that model the dynamics of molecular systems.
  Often, these dynamics concentrate near low-dimensional manifolds and have transitions (saddle points with a single unstable direction) between metastable equilibria.
  We employ iteratively sampled data and isoclines to locate these saddle points.
  Coupling a black-box sampling scheme (\emph{e.g.,} Markov chain Monte Carlo) with manifold learning techniques (diffusion maps in the case presented here), we show that our method reliably locates equilibria of $X$.
}


\keywords{dynamical systems, Riemannian geometry, manifold learning, statistical mechanics, transition states}

\pacs[MSC Classification]{37M05, 53Z50, 65L99}

\maketitle

\section{Introduction}

This paper presents a method for locating equilibria of dynamical systems on manifolds.
In the applications we envision, there is a compact low-dimensional submanifold $M$ of a high-dimensional Euclidean space and a smooth vector field $X$ on $M$.
By an equilibrium point (also known as a fixed point) of $X$ we mean a point $p \in M$ such that $X(p) = 0$.
We assume that the equilibria of $X$ are isolated and do not lie at the boundary of $M$.
The manifold $M$ is \emph{a priori} unknown and, by assumption, we are capable of randomly sampling the vicinity of arbitrary points of it.
This is a reasonable assumption in cases of practical interest.
Importantly, $M$ is \emph{not} required to be defined either explicitly via an atlas or implicitly by the zeros of a smooth map.
Instead, we apply manifold learning techniques to iteratively construct an atlas, sampling new points as needed.

Possible uses of our method include the exploration of energy landscapes in computational statistical mechanics~\cite{mezey1987,bryngelson1987,wales2018} and energy-based models in deep learning~\cite{lecun2006,pascanu2014,li2018,lou2020}, as well as the identification of transition states in chemical systems~\cite{bochevarov2013,jackson2021}.
There are several additional settings in which dynamical systems on manifolds arise naturally: namely, as exterior differential systems in differential geometry~\cite{bryant1991}, as mechanical systems with holonomic constraints~\cite{fixman1974,arnold1989,mclachlan2014,hairer2006}, as inertial manifolds~\cite{constantin1989}, and as the result of the concentration of measure phenomenon~\cite{wainwright2019}.

In computational statistical mechanics, the state of a mechanical system (\emph{e.g.,} a protein solvated in water) is a point in an Euclidean space of $n = 6 N - 6$ dimensions (\emph{i.e.,} the aggregated three-dimensional positions and momenta of the $N$ atoms comprising the system minus three rotational and three translational degrees of freedom) while the progress of the chemical process being simulated is often governed by a small set of, say, $m \ll n$ degrees of freedom known as collective variables (\emph{e.g.,} dihedral angles, radius of gyration, end-to-end distance, etc.).
The effective motion of the system is driven by the potential of mean force~\cite{roux1995} and it is described by the time-evolution of a point in the $m$-dimensional space of collective variables $M$.
Often, the relevant collective variables are not known in advance and practitioners rely on machine learning for identifying them~\cite{coifman2008,georgiou2017,chen2019,sidky2020,tsai2021}.

Finding transition states (saddle equilibria) between stable equilibria on the vector field $X$ on $M$ determined by the negative of the gradient of the potential of mean force~\cite{kirkwood1935} is useful for the efficient simulation of molecular systems since, under the dynamics of $X$, most of the simulation time is spent in the basin of attraction of a stable equilibrium and only rarely does the system visit another basin of attraction, typically by passing close to a saddle point en route~\cite{varadhan2016}.
Thus, direct searches for transition states help accelerate the sampling of the phase space of the system.

Multiple strategies have been developed over the years for locating saddle points.
In one class, the vector field $X$ is modified to ensure that a simulation exhaustively explores the conformational state space of the system (this is the case of adaptive biasing force~\cite{darve2008}, metadynamics~\cite{laio2002}, and iMapD~\cite{chiavazzo2017}).
Another class of methods constructs a path in a space of collective variables joining two minima of the potential of mean force by locally minimizing the free energy along its points.
The technique was pioneered by~\cite{ulitsky1990} and many variants exist under the name of the string method~\cite{e2002,peters2004,pan2008}.
Yet another class is comprised by eigenvector-following methods such as~\cite{crippen1971,cerjan1981,lucia2002}, the dimer method~\cite{henkelman1999}, and gentlest ascent dynamics~\cite{e2011,gu2017}.
Eigenvector\--following methods only require the coordinates of a free energy minimum as input and they attempt to produce a path joining the input point to a saddle point.
However, some of these methods may not converge globally~\cite{levitt2017}.
The last class of methods includes approaches such as reduced gradient following / Newton trajectories~\cite{quapp1998,hirsch2004} which require a single initial point as input and follow a path leading to an equilibrium (often a saddle), even if the resulting paths may be unphysical.

Our method extends the Newton trajectory concept in two crucial ways: (a) from the Euclidean to the Riemannian manifold setting and, importantly, (b) to the case where the collective variables are not known in advance, but are automatically and gradually discovered using manifold learning as part of the algorithm.
The use of Newton trajectories in non-Euclidean settings was initiated in~\cite{quapp2004} for the special case of the manifold of internal coordinates of a molecule (considering a single chart), adopting an extrinsic approach to the geometrical problem, and assuming \emph{a priori} knowledge of the collective variables.
In our variant of the method, only one initial point is required and there is no need for prior knowledge of a set of collective variables of the system.

The paper is organized as follows:
\Cref{sec:isoclines} introduces the notion of isoclines in Euclidean spaces and generalized isoclines in Riemannian manifolds, establishing the results on which the algorithm is based.
\Cref{sec:numerical-scheme} presents our method in detail and illustrates it with simple examples.
\Cref{sec:conclusion} revisits our work and lays out additional avenues of research.


\section{Generalized isoclines}
\label{sec:isoclines}

In this section we show that in any sufficiently small neighborhood $U$ of an isolated equilibrium of a smooth vector field $X$ defined on a connected, compact Riemannian manifold $M$, one can find trajectories whose tangent at a point $x \in U$ are parallel to an arbitrary fixed direction $V \in T_x M$.
This implies that a curve $\gamma$ such that the vector field $X$ on the curve is parallel to $V$ must necessarily connect equilibria of $X$.

We first introduce the notion of an \emph{isocline} in the Euclidean setting.
Then, we discuss the framework under which one can compute such isoclines.
After that, we generalize our results to the case where $M$ is an $m$-dimensional Riemannian manifold.

\subsection{Isoclines in Euclidean space}

Consider the two-dimensional dynamical system given by
\begin{equation}
  \label{eq:ode-2d}
  \left\{
    \begin{aligned}
      &\frac{\d x}{\d t} = f(x, y), \\
      &\frac{\d y}{\d t} = g(x, y), \\
    \end{aligned}
  \right.
\end{equation}
for some smooth real-valued functions $f, g \in C^\infty(\R^2)$.
An isocline~\cite{guckenheimer1983} of~\eqref{eq:ode-2d} is a curve $\gamma$ whose trace is the set of points $(x, y) \in \R^2$ such that
\begin{equation*}
  \frac{\d y}{\d x} = \frac{g(x, y)}{f(x, y)} \equiv \text{constant},
\end{equation*}
or, equivalently, in the case when both $f(x, y) \neq 0$ and $g(x, y) \neq 0$, we get
\begin{equation}
  \label{eq:isocline-2d}
  \frac{\d x}{f(x, y)} = \frac{\d y}{g(x, y)}.
\end{equation}

Now consider the Euclidean space $M = \R^m$ and let us denote by $\X(M)$ the set of smooth vector fields defined on $M$.
Let $X \in \X(M)$ be the vector field associated with the dynamical system $\dot{x} = X(x)$.
In coordinates, we write $X(x) = (X^1(x), \dotsc, X^m(x))$.
The characteristic system of ordinary differential equations \eqref{eq:isocline-2d} can be immediately generalized to
\begin{equation*}
  \frac{\d x^1}{X^1(x)}
  =
  \dotsb
  =
  \frac{\d x^m}{X^m(x)}
\end{equation*}
for the vector field $X$ and is valid at points such that $X^i(x) \neq 0$ for all $i = 1, \dotsc, m$.

\begin{definition}[Isocline]
  Let $Y = X / \| X \|$ be the normalized vector field corresponding to $X \in \X(M)$.
  An \emph{isocline} (see \Cref{fig:isodirectional-curve}) of the vector field $X$ in the direction $V \in \R^m$ with $\| V \| = 1$ is a curve $\gamma \colon [0, T] \to M$ satisfying
  \begin{equation}
    \label{eq:isocline}
    Y(\gamma(s)) = V
    \quad \text{for} \quad
    0 \le s \le T.
  \end{equation}
\end{definition}

\begin{figure}[ht]
  \centering
  {
    \includegraphics{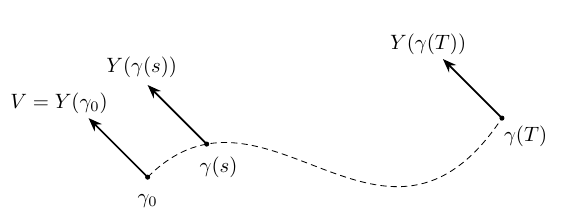}
    %
    %
    %
    %
    %
    %
    %
    %
  }
  \caption{Isocline $\gamma$ (dashed line) and normalized vector field $Y$ (arrows) at several points of the curve.}
  \label{fig:isodirectional-curve}
\end{figure}

\begin{remark}
  There are vector fields for which the definition of isocline is hardly useful.
  For example, a constant vector field $X(x) \equiv V$ has isoclines only in the direction $V$ and all curves are isoclines for this direction.
  Thus, the definition is most useful when $Y$, regarded as a \emph{map} from the complement of the equilibria of $X$ to the unit sphere is onto and almost all points of the sphere are regular values of this map.
  The implicit function theorem implies that the isoclines of regular values are smooth curves.
  Throughout the paper, we assume implicitly that we are computing isoclines of regular values of $Y$.
  Investigation of the isoclines of singular values of $Y$ for generic vector fields $X$ is an interesting topic for further work.
\end{remark}

\begin{proposition}
  \label{thm:all-directions}
  Let $x_\star \in \R^m$ be an isolated equilibrium of the dynamical system $\dot{x} = X(x)$.
  Let $V \in \R^m$ such that $\| V \| = 1$.
  Then, there exists a neighborhood $U \subset \R^m$ of $x_\star$ and points $x \in U$ such that $X(x) = \lambda V$ for some $\lambda \in \R \setminus \{ 0 \}$.
  Moreover, the set of such $x$ forms a curve (see also~\cref{fig:directions}).
\end{proposition}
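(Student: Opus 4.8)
The plan is to recast the parallelism condition $X(x)=\lambda V$ as the vanishing of a single map into $\R^{m-1}$ and then invoke the implicit function theorem. Let $\Pi = \id - V V^{\top}$ be the orthogonal projection onto the hyperplane $V^{\perp}$, and define $F \colon \R^m \to V^{\perp} \cong \R^{m-1}$ by $F(x) = \Pi X(x)$. The key observation is that $X(x)$ is parallel to $V$ exactly when its component orthogonal to $V$ vanishes, so that
\[
  X(x) = \lambda V \text{ for some } \lambda \in \R
  \iff
  F(x) = 0,
  \quad \text{in which case } \lambda = \langle X(x), V \rangle .
\]
Since $x_\star$ is an equilibrium, $X(x_\star) = 0$ and hence $F(x_\star) = 0$, so $x_\star$ already lies in the zero set of $F$, which will turn out to be the desired curve.

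First I would establish that $0$ is a regular value of $F$ at $x_\star$. Differentiating gives $DF(x_\star) = \Pi \, A$, where $A = DX(x_\star)$ is the linearization of the vector field at the equilibrium. If $x_\star$ is nondegenerate then $A$ is invertible, so $A(\R^m) = \R^m$ and therefore $\Pi A(\R^m) = \Pi(\R^m) = V^{\perp}$; that is, $DF(x_\star) \colon \R^m \to V^{\perp}$ is surjective of rank $m-1$. I expect this to be the step that carries the argument, and it is exactly where the nondegeneracy (equivalently, the regular-value hypothesis highlighted in the preceding remark) is essential: if $A$ were singular with $V$ in a degenerate position relative to its image, the rank could drop and $F^{-1}(0)$ would fail to be a clean one-dimensional object. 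I note in passing that the \emph{existence} half alone also follows from topological degree, since a nondegenerate equilibrium has local index $\pm 1 \neq 0$, whence $Y$ restricted to a small sphere about $x_\star$ is surjective onto $S^{m-1}$; but this does not by itself produce the curve, which is why I favor the implicit function theorem.

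With surjectivity in hand, the implicit function theorem (equivalently, the regular-value/preimage theorem) yields a neighborhood $U$ of $x_\star$ in which $F^{-1}(0) \cap U$ is an embedded one-dimensional submanifold of $\R^m$ through $x_\star$; this is the curve asserted in the statement. It then remains to produce the points with $\lambda \neq 0$ and to see that they fill out this curve. Because $x_\star$ is isolated, after shrinking $U$ we may assume it contains no other equilibrium of $X$. Then every $x \in F^{-1}(0) \cap U$ with $x \neq x_\star$ satisfies $X(x) \neq 0$, and since $F(x)=0$ forces $X(x) = \lambda V$, we obtain $\lambda = \langle X(x), V \rangle \neq 0$; such points exist because the curve passes through $x_\star$ and contains points arbitrarily close to it. This simultaneously establishes the existence of points with $X(x) = \lambda V$, $\lambda \neq 0$, and shows that the set of all such points is the curve $F^{-1}(0) \cap U$, terminating at the equilibrium $x_\star$, as claimed.
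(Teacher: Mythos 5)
Your argument is sound, and it is worth pointing out that the paper actually states Proposition~\ref{thm:all-directions} \emph{without} proof: the only justification it offers is the preceding remark, which notes that the implicit function theorem makes the isoclines of regular values of $Y = X/\|X\|$ into smooth curves. Your construction $F(x) = (\id - V V^\top) X(x)$ is essentially that intended argument carried out in full, with one genuine improvement: since $F$, unlike $Y$, is defined and smooth at $x_\star$ itself, the regular-value theorem hands you a single smooth curve passing \emph{through} the equilibrium rather than two half-branches $Y^{-1}(V)$ and $Y^{-1}(-V)$ merely accumulating on it; the projection you use is also the same $Q = I - (X^\top X)^{-1} X X^\top$ that reappears in the paper's proof of Proposition~\ref{thm:euclidean-field-of-lines}. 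The one point that deserves emphasis is the hypothesis: as literally stated the proposition assumes only that $x_\star$ is isolated, whereas your proof requires $DX(x_\star)$ to be invertible. This is not a defect of your argument but of the statement --- for example, $X(x) = \|x\|^2 e_1$ has an isolated equilibrium at the origin yet admits no $x$ with $X(x) = \lambda V$, $\lambda \neq 0$, unless $V = \pm e_1$ --- and you correctly tie the needed nondegeneracy to the paper's standing genericity assumption on regular values. Two minor remarks: the set of points with $\lambda \neq 0$ is the curve punctured at $x_\star$, not the curve itself; and your degree-theoretic aside (index $\pm 1$ forces $Y$ restricted to a small sphere to be onto $\S^{m-1}$) is a correct and pleasant alternative for the existence half, though, as you say, it does not by itself yield the curve structure.
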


\begin{figure}[ht]
  \centering
  \begin{subfigure}[b]{0.48\columnwidth}
    \includegraphics[width=0.75\columnwidth]{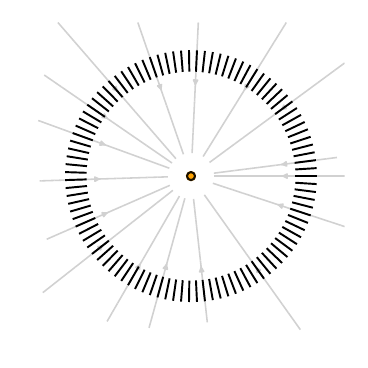}
    \caption{Sink (\textcolor{orange}{$\bullet$}).}
  \end{subfigure}
  \hfill
  \begin{subfigure}[b]{0.48\columnwidth}
    \includegraphics[width=0.75\columnwidth]{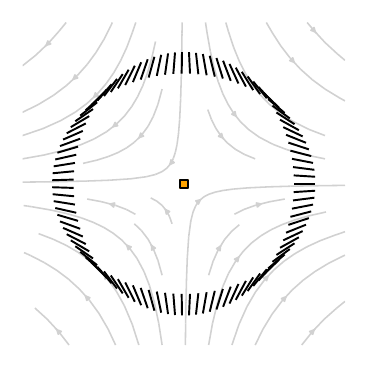}
    \caption{Saddle point (\textcolor{orange}{$\blacksquare$}).}
  \end{subfigure}
  \caption{Directions (black line segments) around two equilibria (marked in orange) of a vector field (gray stream lines).
    In both cases, we see that there exists an orbit in the neighborhood of the equilibrium that is tangent at some point to any prescribed direction.}
  \label{fig:directions}
\end{figure}

\begin{example}
  A \emph{nullcline} is an isocline whose direction is of the form $V = e_i$, where $\{ e_\ell \mid \ell = 1, \dotsc, m \}$ is the canonical basis of $\R^m$.
  Indeed, the traces of the nullclines of $\dot{x} = X(x)$ are, respectively, the sets
  \begin{equation*}
    N_i = \{ x \in \R^m \mid X^i(x) = 0 \}.
  \end{equation*}
  Clearly, the intersection $\cap_{i = 1}^m N_i$ is the set of equilibria of the vector field $X$.
\end{example}

Mapping tangent spaces between points on a Riemannian manifold is more subtle than translation of vectors in $\R^m$.
Differential geometry introduces the concept of affine connections which give a way to relate tangent spaces.
To exploit this framework, we reformulate the definition of isocline.
Differentiating~\eqref{eq:isocline} yields
\begin{equation*}
  \left\{
    \begin{aligned}
      &\frac{\d Y}{\d s}(\gamma(s)) = 0, \\
      &Y(\gamma_0) = V,
    \end{aligned}
  \right.
\end{equation*}
where $\gamma_0 = \gamma(0) \in \R^m$.
Moreover, since
\begin{equation*}
  \frac{\d}{\d s} Y(\gamma(s)) = D Y(\gamma(s)) \, \dot{\gamma}(s),
\end{equation*}
we can rewrite~\eqref{eq:isocline} as
\begin{equation}
  \label{eq:euclidean-parallel-transport}
  \left\{
    \begin{aligned}
      &D Y(\gamma(s)) \, \dot{\gamma}(s) = 0, \\
      &Y(\gamma_0) = V.
    \end{aligned}
  \right.
\end{equation}

We will next see that the isocline $\gamma \colon [0, T] \to M$ in~\eqref{eq:euclidean-parallel-transport} can be solved for, as long as $Y = X / \| X \|$ is known, $\gamma_0 \in M$ is an arbitrary phase space point, and $V = Y(\gamma_0)$.

\begin{proposition}
  \label{thm:euclidean-field-of-lines}
  Let $X$ be a smooth vector field in $\R^m$ and let $Y = X / \| X \|$ be defined on an open set $U \subset \R^m$ on which $X$ does not vanish.
  If the Jacobian matrix $D X$ of $X$ is full rank in $U$, then the nullspace of the Jacobian matrix $D Y$ of $Y$ is one-dimensional in $U$, giving rise to a unique field of lines.
\end{proposition}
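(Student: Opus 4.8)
The plan is to compute the Jacobian $DY$ in closed form and read off its nullspace directly. Writing $r = \|X\| = (X^\top X)^{1/2}$, the quotient rule together with the identity $Dr = r^{-1} X^\top DX$ yields
\begin{equation*}
  DY = \frac{1}{r}\left( I - \frac{X X^\top}{\|X\|^2}\right) DX = \frac{1}{\|X\|}\, P \, DX,
\end{equation*}
where $P = I - Y Y^\top$. This elementary differentiation is the first step, and it is valid throughout $U$ precisely because $X$ does not vanish there, so that $r > 0$.

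Next I would identify $P$ as the orthogonal projection of $\R^m$ onto the hyperplane $Y^\perp = X^\perp$. Since $\|Y\| = 1$, it satisfies $P = P^\top = P^2$, has rank $m - 1$, and its nullspace is exactly $\Span\{Y\} = \Span\{X\}$. Because the scalar factor $1/\|X\|$ is nonzero, the nullspace of $DY$ coincides with that of $P \, DX$, and a vector $v$ lies in it precisely when $DX\, v \in \ker P = \Span\{X\}$.

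The full-rank hypothesis enters at this point. Since $DX$ is a full-rank $m \times m$ matrix it is invertible, so $\range(DX) = \R^m$ and hence $\range(P\, DX) = P(\R^m) = \range(P)$ has dimension $m - 1$. By the rank–nullity theorem the nullspace of $DY$ is exactly one-dimensional. Explicitly, the condition $DX\, v \in \Span\{X\}$ combined with invertibility of $DX$ gives $\ker(DY) = (DX)^{-1}\Span\{X\} = \Span\{(DX)^{-1} X\}$. Since $X$, $DX$, and $(DX)^{-1}$ all depend smoothly on the base point and $X$ is nonvanishing, the assignment $x \mapsto \Span\{(DX(x))^{-1} X(x)\}$ is a well-defined, smooth, and uniquely determined field of lines on $U$.

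The step I expect to require the most care is verifying that the rank of the product $P\, DX$ does not drop below $m - 1$: multiplying the rank-$m$ matrix $DX$ by the rank-$(m-1)$ projection $P$ could a priori reduce the rank further. Invertibility of $DX$ is exactly what forbids this, since it forces $\range(DX)$ to be all of $\R^m$ rather than a proper subspace that $P$ might collapse onto a space of dimension below $m-1$. This is the precise juncture at which the assumption that $DX$ is full rank is indispensable, and I would emphasize it as the crux of the argument.
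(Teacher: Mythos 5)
Your proof is correct and follows essentially the same route as the paper's: both factor $DY = \|X\|^{-1} Q\,DX$ with $Q = I - (X^\top X)^{-1} X X^\top$ the orthogonal projector onto $X^\perp$, and both use invertibility of $DX$ to conclude the kernel is the line $(DX)^{-1}\Span\{X\}$. The only cosmetic difference is that the paper writes the spanning vector as $\adj(DX)\,X$ rather than $(DX)^{-1}X$, which has the minor advantage of remaining well defined where $DX$ degenerates.
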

\begin{proof}
  The matrix $D Y$ is proportional to the orthogonal projection $Q$ of the column space of $D X$ onto the orthogonal complement of $X$.
  Indeed,
  \begin{equation*}
    D Y = \frac{1}{\| X \|} Q \, D X,
    \quad \text{where} \quad
    Q = I - (X^\top X)^{-1} X X^\top.
  \end{equation*}
  Since $\ker Q = \Span\{ X \}$ and $D X$ is full rank by hypothesis, we conclude that $\ker D Y = \Span\{ L \}$ where $L = \adj(D X) \, X$ with $\adj(A)$ denoting the transpose matrix of cofactors (or \emph{adjugate}) of $A$.
  Recall that $A^{-1} = \det(A)^{-1} \adj(A)$, so even if $D X$ is singular, $L$ is well defined.
\end{proof}

The problem~\eqref{eq:euclidean-parallel-transport} is that of finding an integral curve tangent to the unique field of lines of Proposition~\ref{thm:euclidean-field-of-lines}.
We have formulated the problem of obtaining isoclines as a system of differential-algebraic equations because it lends itself to generalization to the Riemannian setting.
The usual approach in Euclidean space is to compute isoclines using numerical continuation methods.

\subsection{Generalized Isoclines on Riemannian manifolds}
\label{sec:riemannian-isoclines}

Let $M$ be a smooth, connected, and compact $m$-dimensional manifold with Riemannian metric $g$.
We denote the set of smooth, real-valued functions on $M$ by $C^\infty(M)$.

With modest restrictions on the smooth vector field $X \in \X(M)$, we generalize the notion of an isocline to this setting and then use the generalized isoclines as curves joining equilibria of $X$.
The key idea is that the condition $\frac{\d}{\d s} Y(\gamma(s)) = D Y(\gamma(s)) \dot{\gamma}(s) = 0$ is replaced by $\nabla_{\dot \gamma(s)} Y(\gamma(s)) = 0$, where $\nabla$ in this case is the covariant derivative determined by an affine connection on $M$ and the normalized vector field is now given by $Y = X / \sqrt{g(X, X)} \in \X(M)$.

Consider an arbitrary chart $(U, \phi)$ where $U \subset M$ is an open set and
\begin{equation*}
  \phi = (\phi^1, \dotsc, \phi^m) \colon U \to \R^m
\end{equation*}
is a system of coordinates with corresponding parameterization $\psi \colon \R^m \to U \subset M$ such that $\psi = \phi^{-1}$ (see Figure~\ref{fig:chart}).
\begin{figure}[ht]
  \centering
  \begin{overpic}[width=0.325\columnwidth]{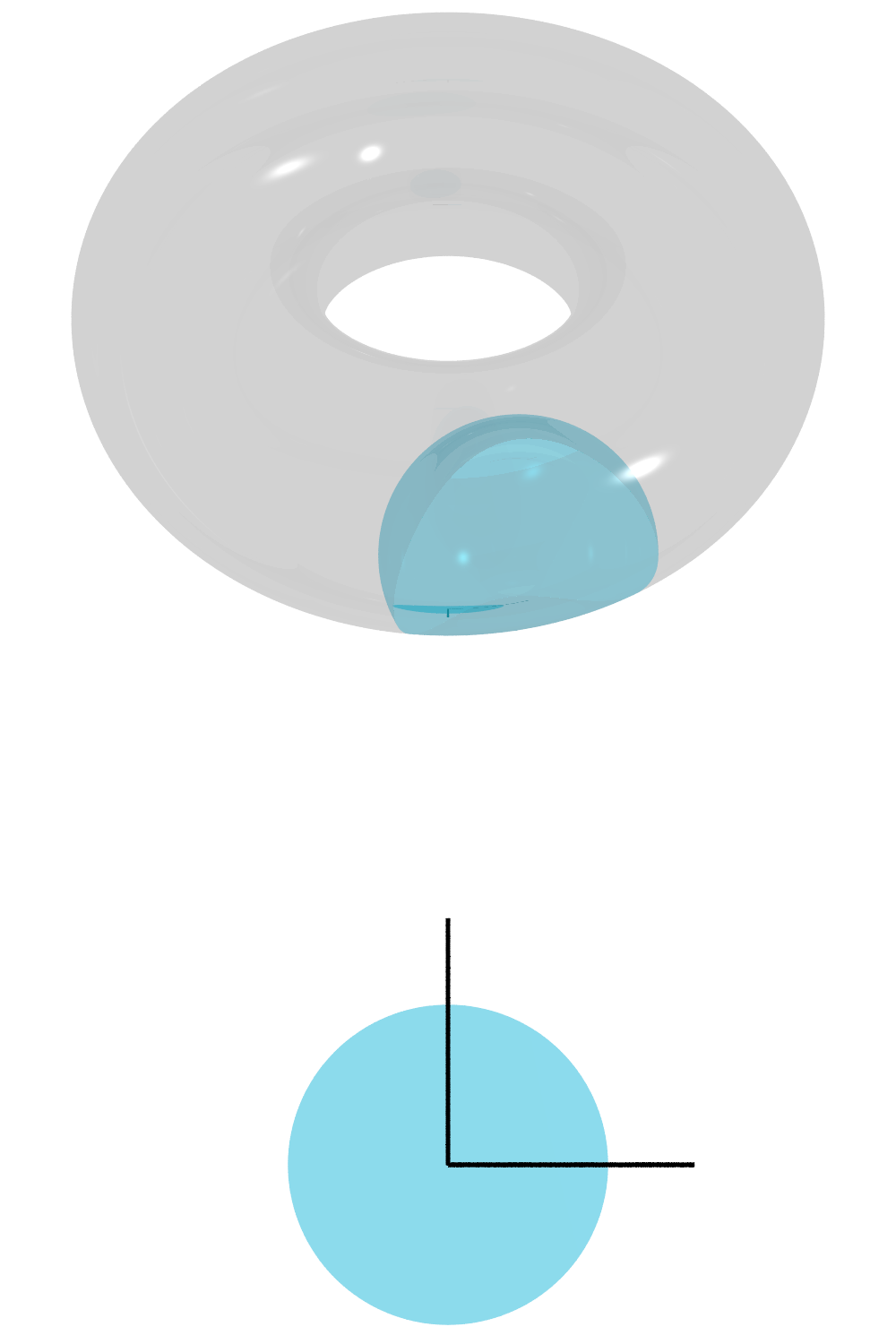}
    \thicklines
    \put(25, 47.5){\vector(0, -1){15}}
    \put(40, 32.5){\vector(0, 1){15}}
    \put(20, 40){$\phi$}
    \put(42.5, 40){$\psi$}
    \put(-7, 55){$M \subset \R^n$}
    \put(47.5, 50){$U$}
    \put(15, 20){$\R^m$}
  \end{overpic}
  \caption{System of coordinates $\phi$ and parameterization $\psi$ on an open set $U$ of $M$.}
  \label{fig:chart}
\end{figure}
Let $\{ \partial_i = \frac{\partial}{\partial \phi^i} \mid i = 1, \dotsc, m \}$ be a basis of the tangent space $T_x M$ at a point $x \in M$.
In local coordinates, we write $X = \sum_{i = 1}^m X^i \, \partial_i$ and $Y = \sum_{i = 1}^m Y^i \, \partial_i$.

To carry out (covariant) differentiation on a Riemannian manifold, we  introduce an affine connection~\cite[Chapter 2]{docarmo1992}.
For the purposes of this paper, an affine connection is a mapping $\nabla \colon \X(M) \times \X(M) \to \X(M)$ that is $C^\infty(M)$-linear in its first argument and it is a derivation~\cite{bourbaki2009} in its second argument.
That is, for $f, g \in C^\infty(M)$, $\alpha, \beta \in \R$, and $V, W, Z \in \X(M)$, we have
\begin{enumerate}
\item $\nabla_{f Z + g V} W = f \nabla_Z W + g \nabla_V W$,
\item $\nabla_Z (\alpha V + \beta W) = \alpha \nabla_Z V + \beta \nabla_Z W$,
and
\item $\nabla_Z (f V) = (Z f) V + f \nabla_Z V$.
\end{enumerate}
In coordinates, a connection is characterized by
\begin{equation*}
  \nabla_{\partial_i} \partial_j = \sum_{k = 1}^m \Gamma_{ij}^k \partial_k
\end{equation*}
for $i, j = 1, \dotsc, m$, where $\Gamma_{ij}^k \in C^\infty(M)$.
The coefficients $\Gamma_{ij}^k$ are called \emph{Christoffel symbols} when the affine connection is the Levi-Civita connection~\cite{docarmo1992}.
In this work, we use the Levi-Civita connection.

The parallel transport equation
\begin{equation}
  \label{eq:riemannian-parallel-transport}
  \nabla_{\dot{\gamma}} Y
  =
  \sum_{k = 1}^m \left\{
    \sum_{i = 1}^m
    \left(
      \frac{\partial Y^k}{\partial \phi^i}
      +
      \sum_{j = 1}^m \Gamma_{ij}^k Y^j
    \right)
    \dot{\gamma}^i
  \right\}
  \partial_k
  =
  0.
\end{equation}
coincides with~\eqref{eq:euclidean-parallel-transport} when $Y(\gamma_0) = V$ and $M = \R^m$ (\emph{i.e.,} the covariant derivative of the flat connection $\bar{\nabla}$ is just the ordinary derivative).
Indeed, the relation between~\eqref{eq:euclidean-parallel-transport} and~\eqref{eq:riemannian-parallel-transport} becomes clearer when we write the two equations side by side as
\begin{equation*}
  \left\{
    \begin{aligned}
      &\bar{\nabla}_{\dot{\gamma}(s)} Y(\gamma(s)) = D Y(\gamma(s)) \, \dot{\gamma}(s) = 0, \\
      &Y(\gamma_0) = V,
    \end{aligned}
  \right.
  \qquad
  \left\{
    \begin{aligned}
      &\nabla_{\dot{\gamma}(s)} Y(\gamma(s))
      =
      0, \\
      &Y(\gamma_0) = V.
    \end{aligned}
  \right.
\end{equation*}

Geometrically, equation~\eqref{eq:riemannian-parallel-transport} is satisfied when $Y(\gamma(s))$ is equal to the parallel translation of $Y(\gamma(s + \tau))$ as $\tau \to 0$ for $s \in [0, T]$ (see \Cref{fig:geometric-construction-covariant-derivative} and \Cref{sec:three-views-parallel-transport}).

\begin{figure}[ht]
  \centering
  \includegraphics{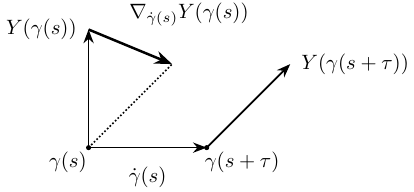}
  %
  %
  %
  %
  %
  %
  %

  \caption{For fixed $s \in [0, T]$, the vector $\dot{\gamma}(s)$ has its tip located at $\gamma(s + \tau)$, as $\tau \to 0$, and its tail is at $\gamma(s)$.
    The covariant derivative $\nabla_{\dot{\gamma}(s)} Y(\gamma(s))$ is the vector joining the tips of the vector $Y(\gamma(s))$ and the parallel translation (represented as a dotted segment) of the vector $Y(\gamma(s + \tau))$ to the point $\gamma(s)$.
    Consequently, if a curve $\gamma$ is such that $\nabla_{\dot{\gamma}} Y(\gamma) = 0$, then $Y(\gamma(s))$ coincides with the parallel translation of $Y(\gamma(s + \tau))$ to $\gamma(s)$ as $\tau \to 0$ for all $s \in [0, T]$.
  }
  \label{fig:geometric-construction-covariant-derivative}
\end{figure}

For a given $x \in M$, we can keep the second argument $W \in \X(M)$ of $\nabla$ fixed to get the linear mapping $\nabla W \colon T_x M \to T_x M$.
In coordinates, this mapping is
\begin{equation}
  \label{eq:nabla}
  \nabla W
  =
  \sum_{k = 1}^m
  \sum_{i = 1}^m
  a_i^k(W)
  \,
  \frac{\partial}{\partial \phi^k} \otimes \d \phi^i
  \quad \text{where} \quad
  a_i^k(W)
  =
  \sum_{j = 1}^m
  \left(
    \frac{\partial W^k}{\partial \phi^j}
    +
    \Gamma_{ij}^k W^j
  \right).
\end{equation}
We represent $\nabla W$ by the $m \times m$ matrix $A(W)$ with components $a_i^k(W)$.

\begin{proposition}
  \label{thm:riemannian-field-of-lines}
  Let $U$ be an open set of $M$ not containing equilibria of $X \in \X(M)$ and let $Y = X / \sqrt{g(X, X)} \in \X(M)$.
  If $\nabla X$ at a point $x \in U$ is a linear automorphism, then the nullspace of $\nabla Y$ at $x$ is one-dimensional.
\end{proposition}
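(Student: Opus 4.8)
The plan is to mirror the proof of Proposition~\ref{thm:euclidean-field-of-lines}: I would show that, as linear maps $T_x M \to T_x M$ at the fixed point $x$, the covariant derivative $\nabla Y$ factors as the $g$-orthogonal projection onto the orthogonal complement of $X$ composed with the automorphism $\nabla X$, and then read off the kernel. Writing $\rho = \sqrt{g(X, X)}$ (which is nonzero on $U$ since $X$ does not vanish there), the target identity is
\begin{equation*}
  \nabla Y = \frac{1}{\rho} \, Q \, \nabla X,
  \qquad
  Q(W) = W - \frac{g(X, W)}{g(X, X)} \, X,
\end{equation*}
where $Q$ is the orthogonal projection of $T_x M$ onto $\{ X \}^\perp$, the exact Riemannian counterpart of the matrix $Q = I - (X^\top X)^{-1} X X^\top$ from the Euclidean case.

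The bulk of the work is establishing this factorization. First I would use $Y = \rho^{-1} X$ together with the derivation property (property~3 of the affine connection) to get
\begin{equation*}
  \nabla_Z Y = (Z \rho^{-1}) \, X + \rho^{-1} \, \nabla_Z X
  = -\rho^{-2} (Z \rho) \, X + \rho^{-1} \, \nabla_Z X.
\end{equation*}
Here the choice of the Levi-Civita connection becomes essential: because it is compatible with $g$, one has $Z\, g(X, X) = 2\, g(\nabla_Z X, X)$, and since $\rho^2 = g(X,X)$ this gives $Z \rho = \rho^{-1}\, g(\nabla_Z X, X)$. Substituting and collecting a factor of $\rho^{-1}$ yields
\begin{equation*}
  \nabla_Z Y = \rho^{-1}\left( \nabla_Z X - \frac{g(\nabla_Z X, X)}{g(X, X)}\, X \right)
  = \rho^{-1}\, Q(\nabla_Z X),
\end{equation*}
which is precisely the factorization above.

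With the factorization in hand, the conclusion is purely linear-algebraic, as in the Euclidean argument. A direct check shows $Q$ is idempotent with $Q(X) = 0$ and $Q|_{\{X\}^\perp} = \id$, so $\ker Q = \Span\{ X \}$ is one-dimensional, while $\nabla X$ is a linear automorphism by hypothesis. Hence $Z \in \ker \nabla Y$ if and only if $\nabla_Z X \in \ker Q = \Span\{X\}$, i.e. $\nabla_Z X = c\, X$ for some $c \in \R$, i.e. $Z \in \Span\{ (\nabla X)^{-1} X \}$. Therefore $\ker \nabla Y = \Span\{ (\nabla X)^{-1} X \}$ is one-dimensional, giving the desired unique field of lines.

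The step I expect to require the most care is the metric-compatibility computation rather than the linear algebra: one must be sure that differentiating $\sqrt{g(X,X)}$ along $Z$ produces $g(\nabla_Z X, X)$, and this is exactly where the argument departs from the flat case and relies on $\nabla$ being the Levi-Civita connection. It is also what guarantees that $Q$ is a genuine $g$-orthogonal projection, so that $\ker Q$ is exactly one-dimensional; an arbitrary (non-metric) connection would not deliver this clean factorization, and the statement could fail.
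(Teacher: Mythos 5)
Your proof is correct, and it reaches the same factorization as the paper --- $\nabla Y = \rho^{-1}\, Q\, \nabla X$ with $Q$ the orthogonal projection onto $\{X\}^\perp$, from which the one-dimensional kernel $\Span\{(\nabla X)^{-1}X\}$ (equivalently $\Span\{\adj(A(X))X\}$) is read off --- but you arrive at it by a genuinely different route. The paper works in geodesic normal coordinates centered at $x$, so that $\Gamma_{ij}^k = 0$ and $g_{ij} = \delta_{ij}$ at that point; the covariant derivative then collapses to ordinary partial differentiation and the computation reduces verbatim to the Euclidean calculation of \Cref{thm:euclidean-field-of-lines}. You instead stay coordinate-free, using the Leibniz rule for the connection on $Y = \rho^{-1}X$ together with metric compatibility, $Z\,g(X,X) = 2\,g(\nabla_Z X, X)$, to evaluate $Z\rho$. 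The trade-off: the paper's normal-coordinates argument is shorter once one accepts the standard facts about such coordinates, and it directly produces the matrix identity $A(Y) = (X^\top X)^{-1/2} Q\, A(X)$ used later in the algorithm; your argument makes explicit exactly where the Levi-Civita hypothesis enters (compatibility with $g$ is what makes $Q$ a genuine $g$-orthogonal projection with $\ker Q = \Span\{X\}$), whereas in the paper that dependence is hidden inside the existence of normal coordinates with $\partial g_{ij}/\partial\phi^k = 0$. Your closing remark is also apt: for a non-metric connection one still gets $\nabla_Z Y = \rho^{-1}\bigl(\nabla_Z X - \alpha(Z)X\bigr)$ for some $1$-form $\alpha$, but without $\alpha\bigl((\nabla X)^{-1}X\bigr) = 1$ the kernel need not be nontrivial, so the clean statement really does use the Levi-Civita structure.
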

\begin{proof}
  Let $x \in U$.
  Without loss of generality, we consider geodesic normal coordinates $\phi = (\phi^1, \dotsc, \phi^m)$ around a neighborhood $W \subseteq U$ of $x$.
  This implies that at $x$, we have
  \begin{equation*}
    g_{ij} = \delta_{ij},
    \quad
    \Gamma_{ij}^k = 0,
    \quad
    \nabla_{\partial_j} \partial_i = 0,
    \quad
    \text{and}
    \quad
    \frac{\partial g_{ij}}{\partial \phi^k} = 0,
  \end{equation*}
  for $i, j, k = 1, \dotsc, m$.
  Consequently, the components of $\nabla Y$ are determined by
  \begin{align*}
    \nabla_{\partial_j} (Y^i \partial_i)
    &=
      \nabla_{\partial_j} ( X^i/ \sqrt{g(X, X)} \, \partial_i )
      =
      \frac{\partial}{\partial \phi^j} \left( X^i / \sqrt{g(X, X)} \right) \\
    &=
      \frac{1}{\sqrt{g(X, X)}}
      \left(
      \frac{\partial X^i}{\partial \phi^j}
      -
      \frac{1}{g(X, X)} \sum_{k = 1}^m X^i X^k \frac{\partial X^k}{\partial \phi^j}
      \right).
  \end{align*}
  In matrix notation, the above becomes
  \begin{equation*}
    A(Y)
    =
    (X^\top X)^{-1/2} \, Q \, A(X),
  \end{equation*}
  where $Q = I - (X^\top X)^{-1} X X^\top$ is the orthogonal projection matrix onto the complement of $X$ in $T_x M$.
  Since $\ker Q = \Span\{ X \}$, and $A(X)$ is full-rank by hypothesis, we conclude that $\ker A(Y) = \Span\{ \adj(A(X)) X \}$.
\end{proof}

Choosing an orientation for a unit vector in $\ker A(Y)$ yields a unit length vector field for the isoclines of $X$ on open sets where $A(X)$ has full rank. Numerical integration of this equation produces the generalized isoclines.

\begin{remark}
  Solving the differential equations \eqref{eq:euclidean-parallel-transport} and \eqref{eq:riemannian-parallel-transport} give analogous methods for finding isoclines in the Euclidean space and Riemannian manifold settings.
  However, in Euclidean space one can do more, since isoclines satisfy the algebraic equations that impose a constant direction $V$.
  Numerically, one can apply continuation methods that alternate predictor and corrector steps to find isoclines.
  After each step of an initial value solver applied to \eqref{eq:euclidean-parallel-transport}, an equation solver can be used to reduce the distance of the computed point to the isocline.
  In the Riemannian setting, path dependence of parallel transport precludes corrector steps.
  However, we can choose our time steps adaptively to reduce the error.
\end{remark}

\begin{example}[Sphere]
  \label{ex:sphere}
  The M\"uller-Brown potential~\cite{muller1979}, shown in \Cref{fig:muller-brown-potential}, is given by
  \begin{equation}
    \label{eq:muller-brown}
    U(y^1, y^2)
    =
    \sum_{i = 1}^4 A_i \, \e^{a_i (y^1 - y^1_{0i})^2 + b_i (y^1 - y^1_{0i}) (y^2 - y^2_{0i}) + c_i (y^2 - y^2_{0i})^2},
  \end{equation}
  where we use the coefficients listed in \Cref{tab:muller-brown}.
  \begin{table}[ht]
    \centering
    \begin{tabular}{c|rrrr}
      $i$ & 1 & 2 & 3 & 4 \\
      \hline
      $A_i$ & $-200$ & $-100$ & $-170$ & $15$ \\
      $a_i$ & $-1$ & $-1$ & $-6.5$ & $0.7$ \\
      $b_i$ & $0$ & $0$ & $11$ & $0.6$ \\
      $c_i$ & $-10$ & $-10$ & $-6.5$ & $0.7$ \\
      $y^1_{0i}$ & $1$ & $0$ & $-0.5$ & $-1$ \\
      $y^2_{0i}$ & $0$ & $0.5$ & $1.5$ & $1$ \\
    \end{tabular}
    \caption{Parameters of the (planar) M\"uller-Brown potential in Cartesian coordinates.}
    \label{tab:muller-brown}
  \end{table}
  This is a typical model system used in computational chemistry for a chemical reaction that progresses along a curve in a two-dimensional space of collective variables (see \Cref{fig:muller-brown-potential}).
  \begin{figure}[ht]
    \centering
    \includegraphics[width=0.6\columnwidth,trim=1.25cm 0.25cm 0.5cm 0.125cm,clip]{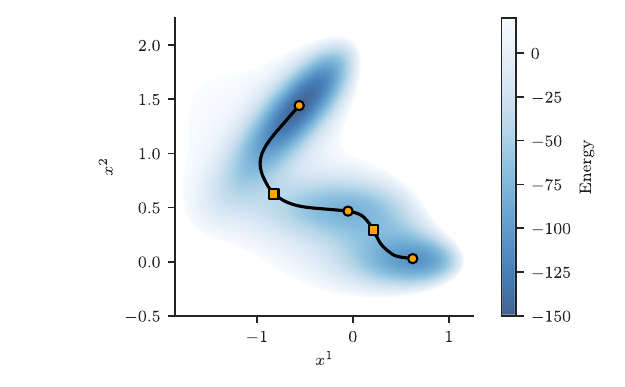}
    \caption{M\"uller-Brown potential (heat map) and path of least action~\cite{arnold1989} (black curve) joining all equilibria. Sinks are represented by (\textcolor{orange}{$\bullet$}) and saddles are represented by (\textcolor{orange}{$\blacksquare$}).}
    \label{fig:muller-brown-potential}
  \end{figure}
  The saddles and sinks of $U$ are listed in \Cref{tab:muller-brown-equilibria}.
  \begin{table}[ht]
    \centering
    \begin{tabular}{l|l|l}
      $y^1$ & $y^2$ & Type \\
      \hline
      $\phantom{-}0.623499404930877$   & $0.0280377585286857$ & sink \\
      $\phantom{-}0.212486582000662$   & $0.292988325107368$  & saddle \\
      $-0.0500108229982061$            & $0.466694104871972$  & sink \\
      $-0.822001558732732$             & $0.624312802814871$  & saddle \\
      $-0.558223634633024$             & $1.44172584180467$   & sink
    \end{tabular}
    \caption{Coordinates of the equilibrium points of the M\"{u}ller-Brown potential and their type.}
    \label{tab:muller-brown-equilibria}
  \end{table}

  Since we are concerned with finding transition states when the conformational phase space of our system is a compact Riemannian manifold, we define the M\"uller-Brown potential on the sphere $\S^2 = \{ (x^1, x^2, x^3) \in \R^3 \mid (x^1)^2 + (x^2)^2 +(x^3)^2 = 1 \}$ as a composition of transformations $E = U \circ \kappa \circ \xi$, where
  \begin{equation*}
    \xi(x^1, x^2, x^3) = (\arctan(x^2 / \, x^1), \arctan(x^3 / \sqrt{(x^1)^2 + (x^2)^2})),
  \end{equation*}
  is a coordinate change from Cartesian to spherical polar coordinates,
  \begin{equation*}
    \kappa(k^1, k^2) = (1.973521294 \, k^1 - 1.85, 1.750704373 \, k^2 + 0.875),
  \end{equation*}
  is an affine mapping that sends the polar coordinates $(k^1, k^2) \in [0, \frac{\pi}{2}] \times [\frac{\pi}{4}, \frac{3 \pi}{4}]$ to the rectangle $R = [-1.85, \frac{5}{4}] \times [-\frac{1}{2}, \frac{9}{4}]$, and $U$ is the planar M\"uller-Brown potential~\eqref{eq:muller-brown}.
  Observe that $R$ contains all the equilibria of $U$ (as shown in \Cref{fig:muller-brown-potential}) and its complement is a highly energetic region of phase space.

  \Cref{fig:muller-isocline-field} shows the M\"uller-Brown potential $E$ on the sphere as well as in one chart of the stereographic projection (see \Cref{sec:stereographic-projection}).
  Note that in this particular realization, the generalized isocline shown in the figure traverses all the equilibria of the vector field on the sphere.

  \begin{figure}[ht]
    \centering
    \begin{subfigure}[t]{0.475\columnwidth}
      \includegraphics[width=\columnwidth,trim=0.3cm 0cm 0.5cm 1cm,clip]{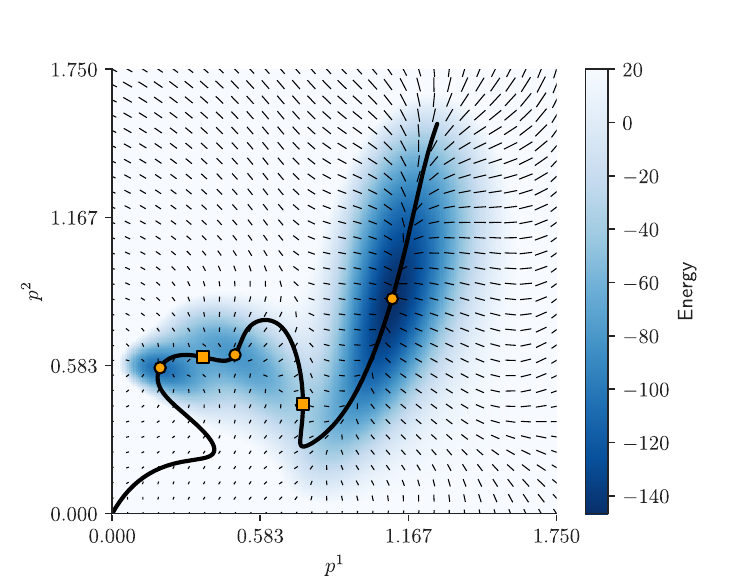}
      \caption{One isocline (black line) and potential energy function (blue heat map) of the M\"uller-Brown potential on the stereographic projection from the North pole of $\S^2$ onto the tangent plane to the South pole.
        The segments are tangent to the gradient field of the M\"uller-Brown potential.}
      \label{fig:muller-isocline-field-a}
    \end{subfigure}
    \hfill
    \begin{subfigure}[t]{0.475\columnwidth}
      \includegraphics[width=\columnwidth]{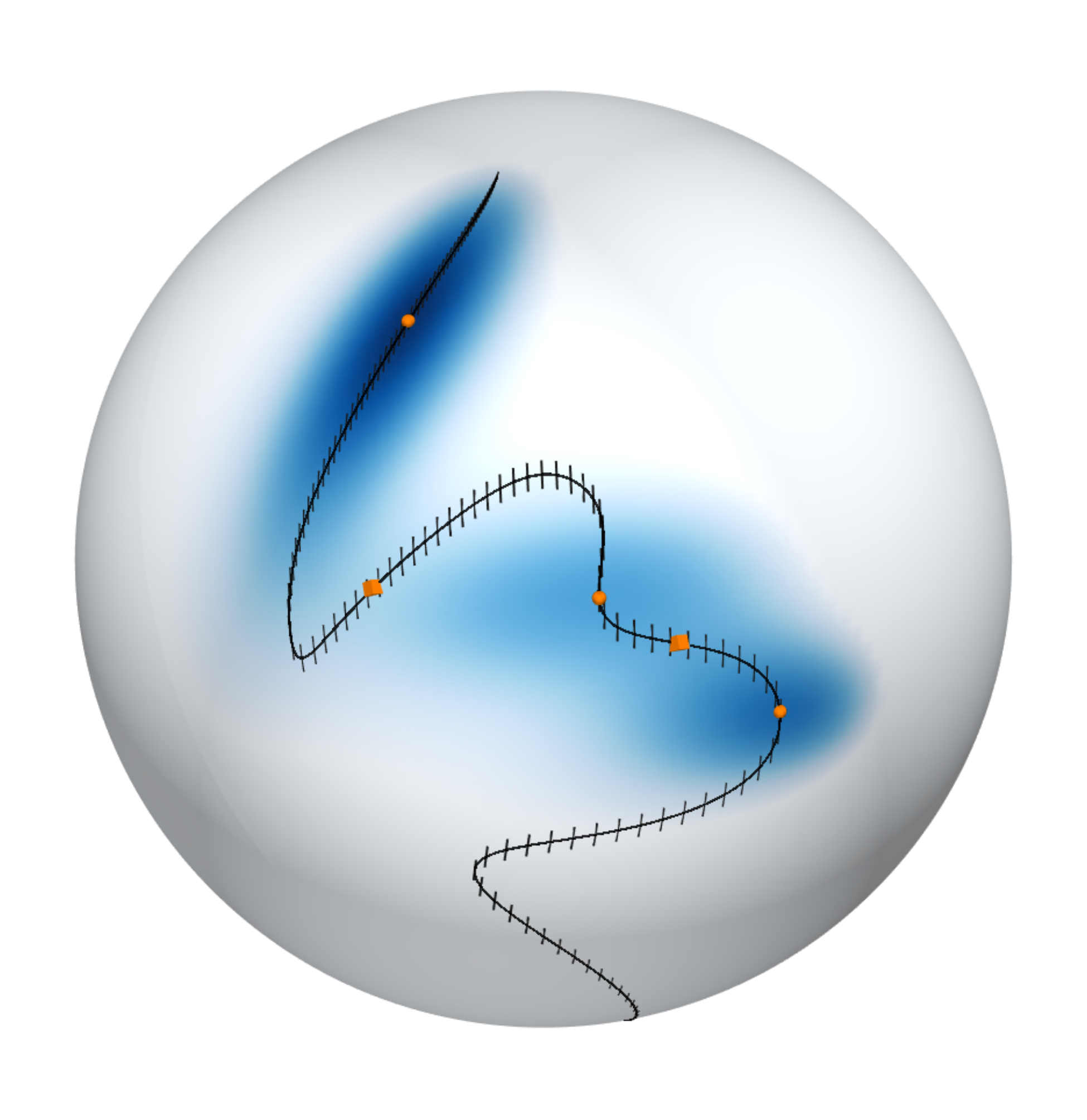}
      \caption{Isocline (black line) on the sphere $\S^2 \subset \R^3$.
        The black segments are tangent to the gradient field of the M\"uller-Brown potential on the sphere.}
      \label{fig:muller-isocline-field-b}
    \end{subfigure}
    \caption{Equilibria obtained by staying the course on the M\"uller-Brown potential on a sphere.}
    \label{fig:muller-isocline-field}
  \end{figure}
\end{example}

\begin{example}[Pseudosphere]
  While the sphere $\S^2$ has constant sectional curvature equal to $+1$, the pseudosphere $\mathbb{P}$~\cite{milnor1973} is a Riemannian manifold with constant sectional curvature equal to $-1$.
  The pseudosphere is given by the zeros of
  \begin{equation*}
    (x^3)^2 = \left(\arcsech \sqrt{(x^1)^2 + (x^2)^2} - \sqrt{1 - (x^1)^2 - (x^2)^2} \right)^2.
  \end{equation*}
  We parameterize the upper-half of $\mathbb{P} \subset \R^3$ by
  \begin{equation*}
    \chi(k^1, k^2)
    =
    (k^1 \cos k^2, k^1 \sin k^2, \arcsech k^1 - \sqrt{1 - (k^1)^2}).
  \end{equation*}
  Next, we write the potential as $E = U \circ \kappa \circ \xi$, where
  \begin{equation*}
    \xi(x^1, x^2, x^3)
    =
    \left(
      \sqrt{(x^1)^2 + (x^2)^2}, \arctan({x^2} / \, {x^1})
    \right),
  \end{equation*}
  is the inverse of the parameterization $\chi$,
  \begin{equation*}
    \kappa(k^1, k^2) = (0.9867606472 \, k^2 - 1.85, 4.406507321 \, k^1 - 1.715856588)
  \end{equation*}
  is an affine mapping from $[0, \pi] \times [0.2756, 0.9]$ to $R$, and $U$ is, again, \eqref{eq:muller-brown}.

Despite the fact that the pseudosphere $\mathbb{P}$ is not compact, it nevertheless exhibits curves $\gamma$ satisfying $\nabla_{\dot{\gamma}} Y = 0$ that connect equilibria of $X$ (see \Cref{fig:muller-pseudosphere}).

  \begin{figure}[ht]
    \centering
    \begin{subfigure}[t]{0.48\columnwidth}
      \includegraphics[width=\columnwidth,trim=0cm 0.25cm 0cm 0cm,clip]{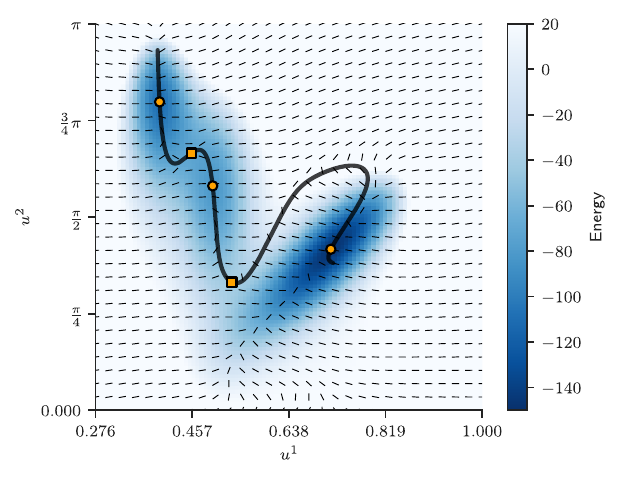}
      \caption{One isocline (black line) and potential energy function (blue heat map) of the M\"uller-Brown potential on the pseudosphere.
        The segments are tangent to the gradient field of the M\"uller-Brown potential.}
      \label{fig:muller-pseudosphere-a}
    \end{subfigure}
    \hfill
    \begin{subfigure}[t]{0.475\columnwidth}
      \includegraphics[width=\columnwidth,trim=0cm 3.5cm 0cm 0cm,clip]{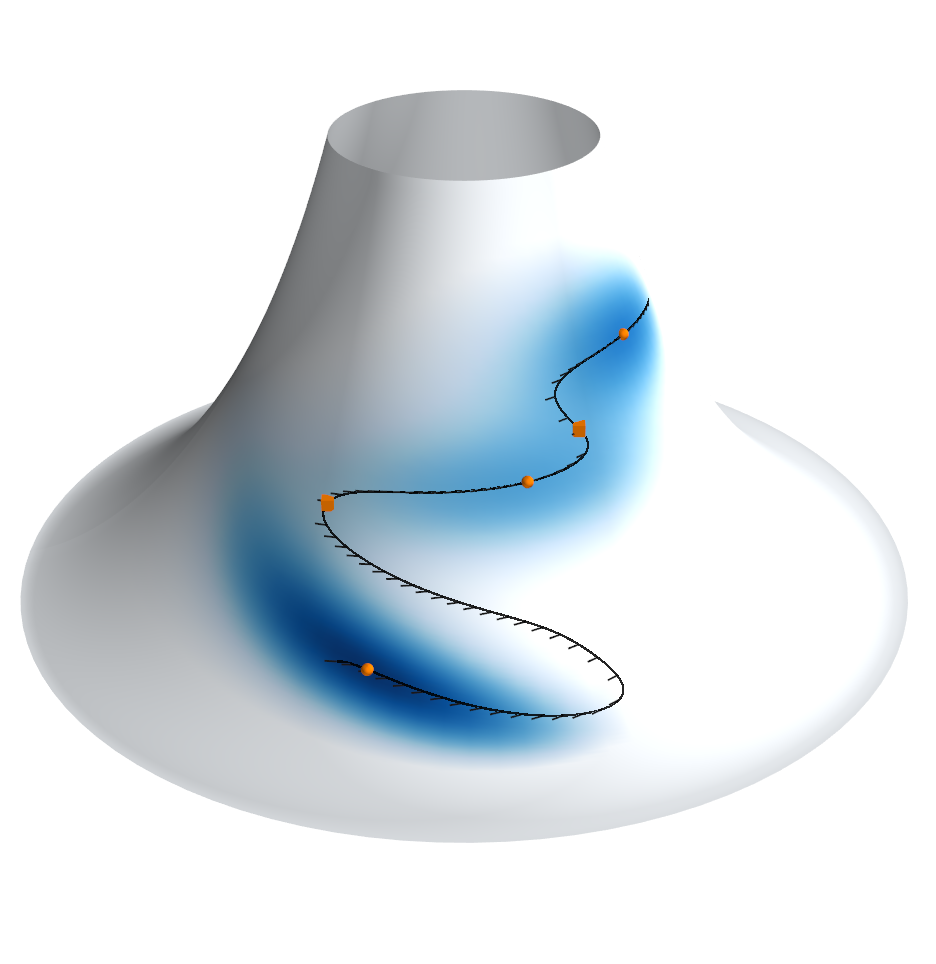}
      \caption{Isocline (black line) on the pseudosphere $\mathbb{P} \subset \R^3$.
        The black segments are tangent to the gradient field of the M\"uller-Brown potential on the pseudosphere.}
      \label{fig:muller-pseudosphere-b}
    \end{subfigure}
    \caption{Equilibria obtained by staying the course on the M\"uller-Brown potential on a pseudosphere.}
    \label{fig:muller-pseudosphere}
  \end{figure}
\end{example}


\section{Numerical scheme and examples}
\label{sec:numerical-scheme}

Let $M$ be a compact Riemannian manifold and let $X \in \X(M)$ be a vector field on $M$.
\Cref{alg:staying-the-course} below traces a curve $\gamma$ that starts at an arbitrary regular point $\gamma_0 \in M$ of $X$ and ends at an isolated equilibrium $x_\star$ of $X$.
The curve $\gamma$ traced by the algorithm is tangent at every point to the vector field $\dot{\gamma} \in \X(M)$ defined by the parallel transport equation $\nabla_{\dot{\gamma}} Y = 0$, where $Y = X / \sqrt{g(X, X)}$.
\begin{algorithm}[ht]
  \small
  \caption{Staying the course}
  \begin{algorithmic}[1]
    \Require Vector field $X \in \X(M)$, initial point $\gamma_0 \in M$, step length $\tau$, tolerance $\rho$, preferred direction $Z_0 \in T_{\gamma_0} M$, samples per iteration $K \in \mathbb{N}$.
    \Ensure Equilibrium $x_\star$ of $X$.
    \State $n$ $\leftarrow$ $0$.
    \Loop
    \State Sample $K$ points from a neighborhood $U \subset M$ of $\gamma_n$.
    \State Obtain diffeomorphisms $\phi \colon U \to \R^m$ and $\psi \colon \R^m \to U$ via manifold learning.
    \While{$\gamma_n \in U$}
    \If{$n > 0$ and $\sqrt{g(X, X)} < \rho$ at $\gamma_n$}
    \State \textbf{return} $x_\star$ $\leftarrow$ $\gamma_n$
    \EndIf
    \State $Z$ $\leftarrow$ vector in $\ker A(Y)$ at $\gamma_n$ \Comment{calculated via partial SVD}
    \State $Z_{n+1}$ $\leftarrow$ $\sign(g(Z_n, Z)) \, \frac{Z}{\sqrt{g(Z, Z)}}$
    \Comment{normalize orientation and magnitude}
    \State $\gamma_{n+1}^k$ $\leftarrow$ $\gamma^k_n + \tau \, Z^k_{n+1} - \tau^2 \sum_{i, j = 1}^m  Z_{n+1}^i \, \Gamma_{ij}^k(\gamma_n) \, Z_{n+1}^j$ for $k = 1, \dotsc, m$
    \State $n$ $\leftarrow$ $n + 1$
    \EndWhile
    \EndLoop
  \end{algorithmic}
  \label{alg:staying-the-course}
\end{algorithm}

\Cref{alg:staying-the-course} proceeds, at a high level, by following these steps:
\begin{enumerate}
\item We draw samples from a neighborhood $U \subset M$ of an initial point $\gamma_0$.
\item We use manifold learning techniques to construct a chart for $U$ with coordinates $\phi = (\phi^1, \dotsc, \phi^m)$.
  Since $M$ is an $m$-dimensional submanifold of $\R^n$, the image $\phi(U)$ lies in $\R^m$ ($m \ll n$ by assumption).
\item Within the chart, we find a curve $\gamma$ that parallel-transports the direction $Y$ of the underlying vector field $X$ at $\gamma_0$.
\item Eventually, the curve $\gamma$ reaches the boundary of the chart.
  At that stage, we replace the point $\gamma_0$ by the latest point in $\gamma$.
\item We repeat the procedure until we reach an equilibrium $x_\star$ of $X$.
\end{enumerate}
In what follows, we detail the different aspects of the algorithm outlined above.

\subsection{Sampling on manifolds}

We assume that we are capable of sampling the vicinity of arbitrary points of $M$.
The choice of a measure on $M$ from which to sample is not important.
However, for the sake of concreteness, let us assume that it is the uniform measure, which is proportional to the Riemannian volume form.

In the numerical examples discussed in this section, we use a Metropolis Markov chain sampler~\cite{metropolis1953,liu2008} but any black-box sampler, such as Hamiltonian Monte Carlo \cite{betancourt2017}, would do.

For systems with holonomic constraints, the sampling mechanism can be as simple as setting a constant $r > 0$ and drawing velocity vectors
\begin{equation*}
  \{ W^{(i)} \in T_x M \mid g(W^{(i)}, W^{(i)})^{1/2} < r, \quad i = 1, \dotsc, K \}
\end{equation*}
uniformly at random to obtain the points
\begin{equation*}
  D = \{ x^{(i)} \in M \mid x^{(i)} = \exp_x W^{(i)}, \quad i = 1, \dotsc, K \}
\end{equation*}
resulting from propagating the initial condition $x \in M$ in the direction of each $W^{(i)}$ using the exponential map~\cite{docarmo1992}.
Similarly, for systems with an inertial manifold, one can propagate an ensemble of trajectories for a short time with initial conditions resulting from perturbing the initial point.

A simple scheme for modifying the vector field $X$ to sample points in the vicinity of a given point is presented at the end of \Cref{sec:transition}.

\subsection{Manifold learning and diffeomorphisms}

Assume that $M$ is a submanifold of $\R^n$ where $n > m$.
Once we have a set $D$ of $K$ points drawn from a neighborhood $U \subset M \subset \R^n$ of the initial point $\gamma_0$, we use manifold learning (see~\cref{sec:manifold-learning} for an introduction to the topic) to reduce the dimensionality of $D$ and, specifically, to get a system of coordinates $\phi \colon U \subset \R^n \to \R^m$ of $M$ in the neighborhood $U$ and the corresponding parameterization $\psi \colon \R^m \to U \subset \R^n$ such that $\psi = \phi^{-1}$ (see \Cref{fig:chart}).

In particular, to obtain the mapping $\phi$, we first use diffusion maps~\cite{coifman2006} with density normalization, taking as our bandwidth parameter the median distance between distinct points of $D$, and we approximate the relevant diffusion maps using Gaussian process regression  (GPR)~\cite{williams1996,rasmussen2006}.

We use GPR for out-of-sample extension of the mapping to diffusion map coordinates of $U \subset M$ and its inverse.
Other options exist, besides GPR.
For instance, Geometric Harmonics~\cite{coifman2006a}, neural networks, or other out-of-sample extension methods~\cite{bengio2003} could also be used to approximate the coordinate mapping.

Thus, $\phi \colon U \to \R^m$ is obtained by fitting the points of the data set $D$ to their corresponding diffusion coordinates.
Likewise, we find $\psi = \phi^{-1}$ by fitting the diffusion coordinates to the original points of the data set $D$.
We present a different approach for obtaining $\psi$ in \Cref{sec:transition}.

\subsection{Pushing the vector field forward}

At each point of the data set $D$ we can sample the vector field $X$ and compute the push-forward of $X$ by the system of coordinates $\phi$.
In other words, we apply the map $X(x) \mapsto D \phi(x) X(x)$, determined by the Jacobian matrix $D \phi$ of the mapping $\phi = (\phi^1, \dotsc, \phi^m)$.

The Jacobian and Hessian matrices of $\phi$ can be calculated using either automatic differentiation~\cite{griewank2008} (AD) or explicit formulas leveraging the fact that a Gaussian process regressor can be written as a linear combination of radial basis functions~\cite[Representer Theorem]{rasmussen2006,hofmann2008}.
Indeed, the Jacobian and Hessian of
\begin{equation*}
  \phi^i(x)
  =
  \sum_{\ell = 1}^K \alpha_\ell \, \e^{-\frac{\| x - x^{(\ell)} \|^2}{2 \nu^2}},
\end{equation*}
are, respectively,
\begin{equation*}
  D \phi^i(x)
  =
  -\frac{1}{\nu^2} \sum_{\ell = 1}^K \alpha_\ell \, \e^{-\frac{\| x - x^{(\ell)} \|^2}{2 \nu^2}} \, (x - x^{(\ell)})
\end{equation*}
and
\begin{equation*}
  D^2 \phi^i(x)
  =
  \frac{1}{\nu^4} \sum_{\ell = 1}^K \alpha_\ell \, \e^{-\frac{\| x - x^{(\ell)} \|^2}{2 \nu^2}} ((x - x^{(\ell)}) (x - x^{(\ell)})^\top - \nu^2 I),
\end{equation*}
where $I$ is the $n \times n$ identity matrix.
Higher-order derivatives can be automated with AD or calculated using closed-form formulas as above.

Alternatively, one may draw a set of samples $D = \{ x^{(i)} \in U \mid i = 1, \dotsc, \lfloor K/2 \rfloor \}$ and propagate them using an explicit Euler scheme
\begin{equation*}
  (x^{(i)})^\prime = x^{(i)} + \Delta t \, X(x^{(i)})
\end{equation*}
to get $D^\prime = \{ (x^{(i)})^\prime \in U \mid i = 1, \dotsc, \lfloor K / 2 \rfloor \}$ for small enough $\Delta t > 0$.
After computing the system of coordinates $\phi$ from data, these two sets of points give us an estimate of the pushforward of $X$.
Indeed,
\begin{equation*}
  \phi_\star X(x^{(i)})
  =
  \left.\frac{\d}{\d t}\right\vert_{t = 0} \phi(\alpha(t))
  \approx
  \frac{\phi((x^{(i)})^\prime) - \phi(x^{(i)})}{\Delta t},
\end{equation*}
where $\alpha \colon [0, 1] \to M$ is a smooth path such that $\alpha(0) = x^{(i)}$ and $\dot{\alpha}(0) = X(x^{(i)})$, for $i = 1, \dotsc, \lfloor K / 2 \rfloor$.

For the sake of simplicity, we abuse notation and denote both the vector field $X$ and its push-forward $\phi_\star X$ by $X$.

In the examples calculated in this paper and in the code available online, we used the Gaussian process regression approach.

\subsection{Pulling the ambient metric back}

Having the mappings $\phi, \psi$ and their respective Jacobian matrices $D \phi, D \psi$ allows us to compute the first fundamental form~\cite{docarmo1992} (metric tensor) of $M$.
The first fundamental form is precisely the pullback of the Euclidean metric in $\R^n$ by $\psi = (\psi^1, \dotsc, \psi^n) \colon \R^m \to U \subset M$,
\begin{equation*}
  g
  =
  \psi^\star \left( \sum_{\ell = 1}^n \d x^\ell \otimes \d x^\ell \right)
  =
  \sum_{i, j = 1}^m g_{ij} \d \phi^i \otimes \d \phi^j,
\end{equation*}
where the components $g_{ij}$ are the entries of the matrix $D \psi^\top D \psi$.
We will see later that the metric tensor $g$ allows us to compute the Christoffel symbols as well as to establish a convergence criterion for our algorithm.

The method above is what we have used for the examples in this paper but it is not the only possible avenue for obtaining $g$.
Another source of estimators for the components $g_{ij}$ can be obtained from Green's identity~\cite{chavel1984}
\begin{equation*}
  \Div \grad( f h )
  =
  h \Delta f + f \Delta h + 2 g(\grad f, \grad h)
\end{equation*}
for $f, h \in C^\infty(M)$.
Yet another source of estimators stems from the short time asymptotics of the heat kernel~\cite{varadhan1967}.
We refer the reader to~\cite{perrault-joncas2013,berry2020} for other methods of computing $g$.

\subsection{The parallel transport equation}

In order to obtain the direction of the tangent vector to the curve $\gamma$ at the current point, we must solve the parallel transport equation
\begin{equation}
  \label{eq:parallel-transport}
  \nabla_Z Y
  =
  \sum_{k = 1}^m \left\{
    \sum_{i = 1}^m
    \left(
      \frac{\partial Y^k}{\partial \phi^i}
      +
      \sum_{j = 1}^m \Gamma_{ij}^k Y^j
    \right)
    Z^i
  \right\}
  \partial_k
  =
  0
\end{equation}
where the vector $Z \in T_{\gamma(s)} M$ is the unknown.

The Christoffel symbols $\Gamma_{jk}^\ell$ for $j, k, \ell = 1, \dotsc, m$ are readily obtained from our characterization of the Riemannian metric $g$ by the formula
\begin{equation*}
  \Gamma_{jk}^\ell
  =
  \sum_{i = 1}^m g^{\ell i} \left( \frac{\partial  g_{ij}}{\partial \phi^k} + \frac{\partial  g_{ik}}{\partial \phi^j} - \frac{\partial g_{jk}}{\partial \phi^i} \right),
\end{equation*}
where $g^{ij}$ denotes the coefficients of the inverse metric (\emph{i.e.,} the components of the inverse matrix of $(g_{ij})_{i, j = 1}^m$).

Solving~\eqref{eq:parallel-transport} for $Z$ is equivalent to finding the one-dimensional nullspace (see \Cref{thm:riemannian-field-of-lines}) of the $m \times m$ matrix $A(Y)$, as defined in equation~\eqref{eq:nabla}.
The step in line 9 of \Cref{alg:staying-the-course} consists of numerically obtaining $\ker A(Y)$.
This can be accomplished by calculating the singular value decomposition (SVD)~\cite{golub2013} using the shift\--and\--invert method within the underlying symmetric eigensolver \cite[Chapter 6]{zhaojun2000} to efficiently obtain the smallest singular value and its associated right singular vector without computing the full SVD.

Again, by \Cref{thm:riemannian-field-of-lines}, the sought-after vector $Z$ spans a one-dimensional subspace of $T_{\gamma(s)} M$ but its magnitude and orientation must be normalized before we can use it to solve a differential equation for $\gamma$.
Indeed, solving the initial value problem
\begin{equation}
  \label{eq:ode-gamma}
  \left\{
    \begin{aligned}
      &\dot{\gamma}(s) = Z(\gamma(s)), \\
      &\gamma(0) = \gamma_0,
    \end{aligned}
  \right.
\end{equation}
as is, would not work in general because $Z$ is not guaranteed to be smooth along the curve $\gamma(s)$ due to the fact that its orientation and magnitude can change abruptly from one point to another.
In order to use $Z$ in practice, we normalize it dividing by $\sqrt{g(Z, Z)}$ so that it has unit length and then we ensure that it is oriented consistently with respect to $\dot{\gamma}$ at the previous time step.
We do so by possibly introducing a change of sign to enforce that the angle between $\dot{\gamma}$ and $Z$ be less than $\pm \frac{\pi}{2}$ radians (this is accomplished by line 10 of \Cref{alg:staying-the-course}).

Once we have suitably normalized $Z$, we can solve~\eqref{eq:ode-gamma} and find another point $\gamma_{n+1}$ along the curve $\gamma$.
To derive line 11 of \Cref{alg:staying-the-course}, we begin by writing the second order ODE $\ddot{\gamma}^k = -\sum_{i,j = 1}^m \dot{\gamma}^i \Gamma_{ij}^k(\gamma) \dot{\gamma}^j$ as the first order system
\begin{equation*}
  \left\{
    \begin{aligned}
      &\dot{\gamma}^k = W^k, \\
      &\dot{W}^k = -\sum_{i, j = 1}^m W^i \Gamma_{ij}^k W^j
    \end{aligned}
  \right.
\end{equation*}
with initial conditions $\gamma^k(0) = \gamma_n^k$ and $W^k(0) = Z_{n+1}^k$.
Next, we apply the explicit Euler scheme for one time step $\tau$ to obtain
\begin{equation*}
  \gamma^k(\tau) = \gamma_n^k + \tau Z_{n+1}^k - \tau^2 \sum_{i, j = 1}^m Z_{n+1}^i \Gamma_{ij}^k(\gamma_n) Z_{n+1}^j + o(\tau^2)
\end{equation*}
as $\tau \to 0$ for $k = 1, \dotsc, m$.
This integrator could be replaced by a better suited numerical algorithm (e.g., a symplectic scheme, etc.)~\cite{hairer2006}.

\subsection{Transition to a new chart}
\label{sec:transition}

At some instant $T > 0$, the solution $\gamma$ of~\eqref{eq:ode-gamma} reaches a point $\gamma(T)$ where the coordinate mapping $\phi$ is no longer valid.
Since $\phi$ is determined by a point-cloud, this typically occurs when $\gamma(T)$ is close to the boundary of the cloud and our estimate of the mapping $\phi$ becomes unreliable due to sparse sampling (high variance) within that region.

Gaussian process regression (GPR), which we use to estimate $\phi$, provides us with a tool to quantitatively decide whether we are close to the boundary.
If $\Sigma$ is the covariance matrix of the Gaussian process $\phi$ at a point $\gamma(s)$, then we can decide to abandon the current coordinate patch based on the criterion $\| \Sigma \| > \eta$, where we take $\| \cdot \|$ to be the operator norm or the Frobenius norm, and $\eta > 0$ is some prescribed threshold value.

Once we decide to discard the coordinate patch, we map the latest point $\gamma_n$ to the ambient space using $\psi = \phi^{-1}$ and we also map the latest vector $Z_n$ to $T_{\gamma_n} M$ using the push-forward $\psi_\star$.
After that, we are ready to construct a new coordinate patch repeating the steps of our algorithm until convergence to an equilibrium.

Instead of using a Gaussian process for $\psi$, we could alternatively solve the stochastic differential equation
\begin{equation}
  \label{eq:umbrella-sampling}
  \d w
  =
  \left(
    X(w) - \zeta \, D \phi(w)^\top (\phi(w) - \gamma_n)
  \right) \d t
  + \sqrt{2 \beta^{-1}} \d B_t, \\
\end{equation}
where $w \in \R^n$, $\zeta > 0$ and $\beta > 0$ are constants, and $B_t$ is a standard $n$-dimensional Brownian motion.
This type of biased sampling also arises as a step in umbrella sampling~\cite{torrie1974} (in applications to molecular dynamics, one typically resorts to packages such as Colvars~\cite{fiorin2013}).
Then the average point $\langle w \rangle$ and the average vector $\langle X(w) \rangle$ of the solution of~\eqref{eq:umbrella-sampling} yield
\begin{equation*}
  \psi(\gamma_n) = \langle w \rangle
  \quad \text{and} \quad
  \psi_\star Z_n = \langle X(w) \rangle.
\end{equation*}
This approach of sampling by simulating a stochastic differential equation is more amenable to the case of a high-dimensional ambient space.

\subsection{Convergence to an equilibrium}

In lines 6--8 of \Cref{alg:staying-the-course} we decide whether the latest point $\gamma_n$ is sufficiently close to an equilibrium of $X$ by measuring the magnitude of $X$ at $\gamma_n$.
We use the metric $g$ to determine whether we have reached the vicinity of an equilibrium of $X$.
If $g(X, X) = 0$ at a given point, this implies that $X = 0$ at that point.
Thus, for a fixed threshold $\rho > 0$, we take the condition $g(X, X) < \rho$ as a convergence criterion for our algorithm.

\subsection{Example}

We revisit here the case of the M\"uller-Brown potential on the sphere (see \Cref{ex:sphere}) but this time we forego the knowledge of the manifold and we construct its charts on the fly via sampling and dimensionality reduction, as explained earlier in this section.
Figures~\ref{fig:muller-sphere-dmaps-000}, \ref{fig:muller-sphere-dmaps-050}, and~\ref{fig:muller-sphere-dmaps-101} illustrate the behavior of our algorithm from the initial iteration up until a nearby saddle point is found.
The main difference between this example and the ones presented in \Cref{sec:isoclines} is that we use a Metropolis sampler to construct the point-clouds, diffusion maps to obtain the local charts, and we fit Gaussian processes to map back and forth between $\R^n$ and the charts, as described earlier in this section.
\begin{figure}[ht]
  \begin{subfigure}[b]{0.4\textwidth}
    \includegraphics[width=\columnwidth,trim=13cm 5cm 5.5cm 5cm,clip]{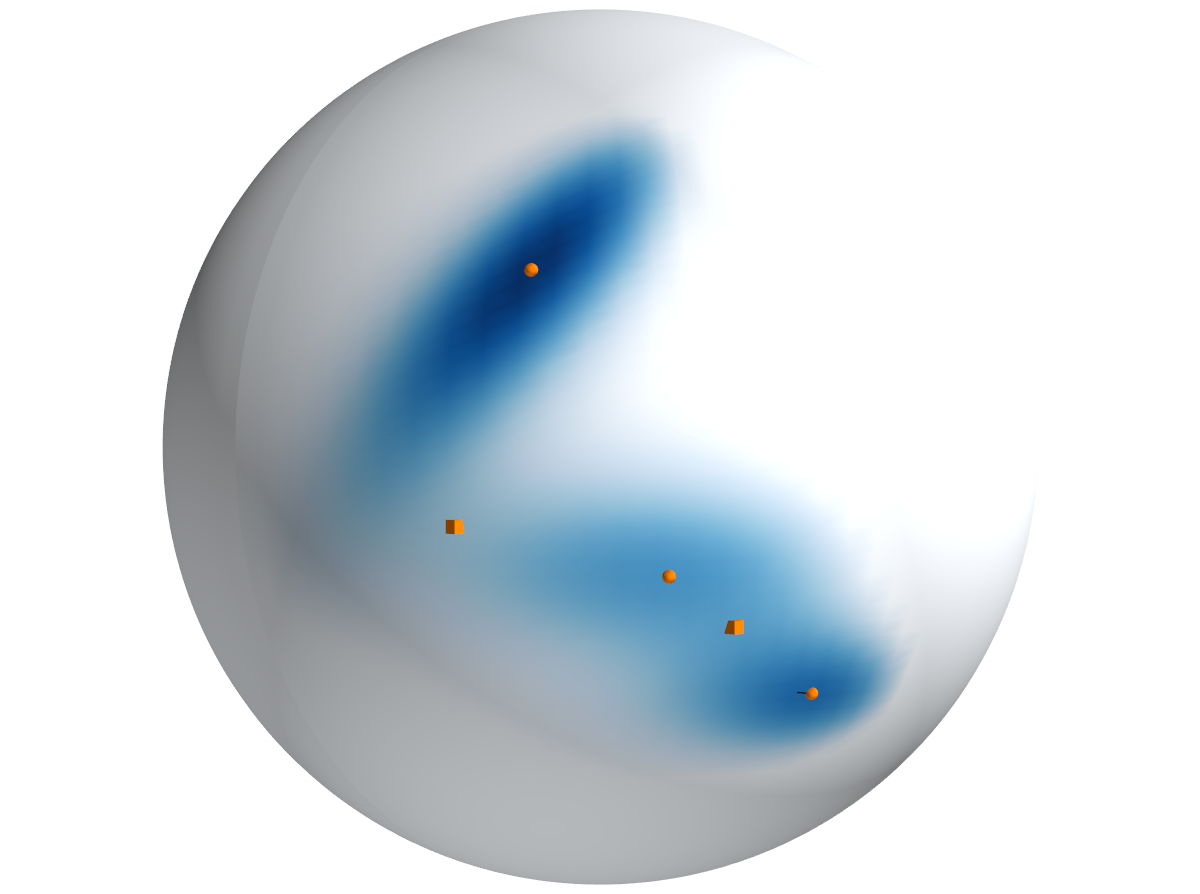}
    \caption{Sphere $\mathbb{S}^2$.}
  \end{subfigure}
  \hfill
  \begin{subfigure}[b]{0.4\textwidth}
    \includegraphics[width=\columnwidth,trim=0.75cm 0.76cm 0.75cm 0.75cm,clip]{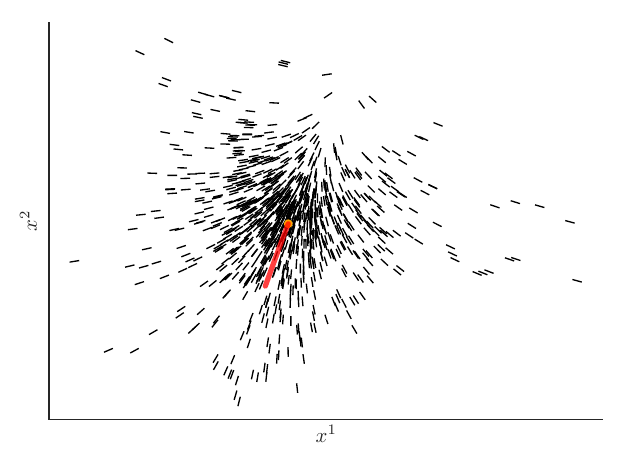}
    \caption{Local chart.}
  \end{subfigure}
  \caption{M\"uller potential on $\mathbb{S}^2$ (diffusion maps). Iteration \#1.}
  \label{fig:muller-sphere-dmaps-000}
\end{figure}

\begin{figure}[ht]
  \begin{subfigure}[b]{0.4\textwidth}
    \includegraphics[width=\columnwidth,trim=13cm 5cm 5.5cm 5cm,clip]{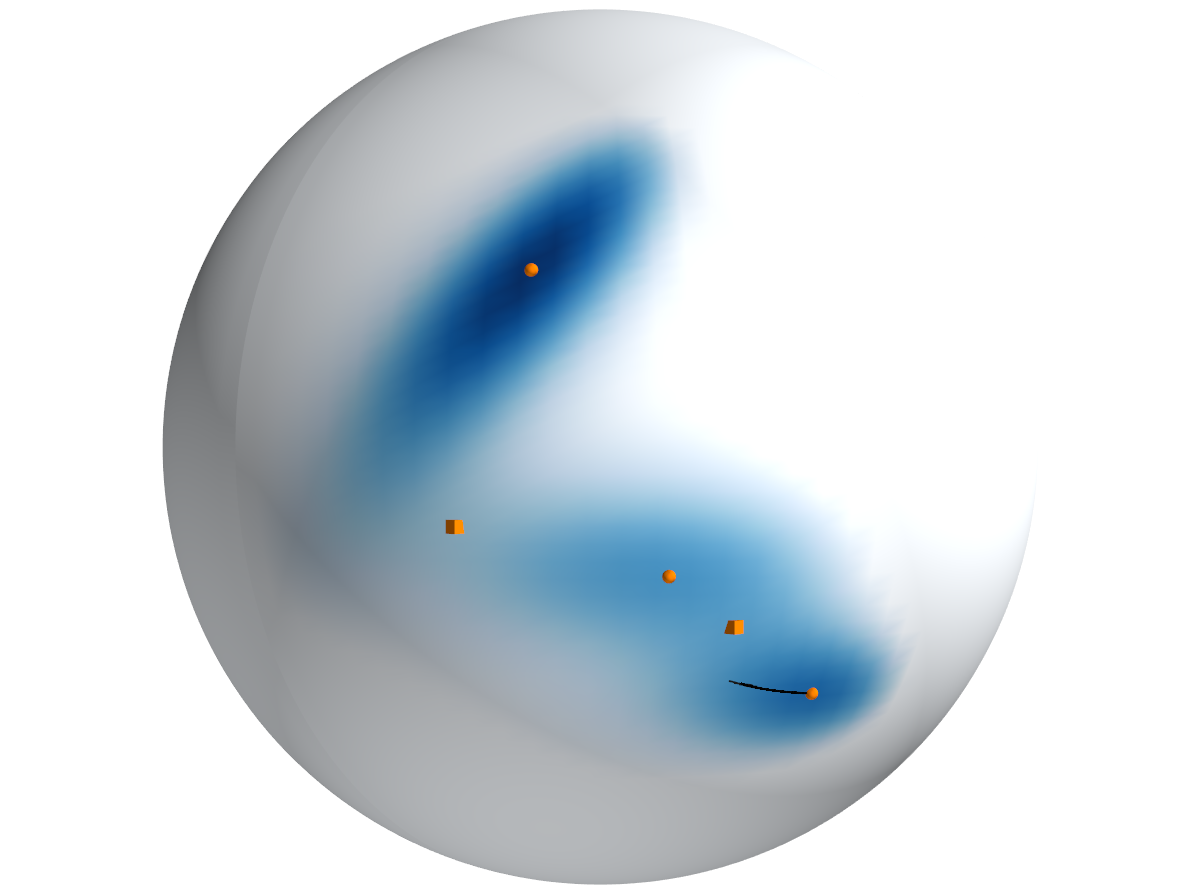}
    \caption{Sphere $\mathbb{S}^2$.}
  \end{subfigure}
  \hfill
  \begin{subfigure}[b]{0.4\textwidth}
    \includegraphics[width=\columnwidth,trim=0.75cm 0.76cm 0.75cm 0.75cm,clip]{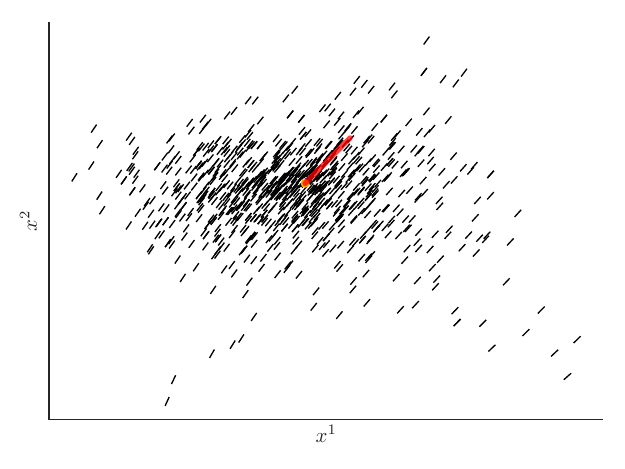}
    \caption{Local chart.}
  \end{subfigure}
  \caption{M\"uller potential on $\mathbb{S}^2$ (diffusion maps). Iteration \#51.}
  \label{fig:muller-sphere-dmaps-050}
\end{figure}

\begin{figure}[ht]
  \begin{subfigure}[b]{0.4\textwidth}
    \includegraphics[width=\columnwidth,trim=13cm 5cm 5.5cm 5cm,clip]{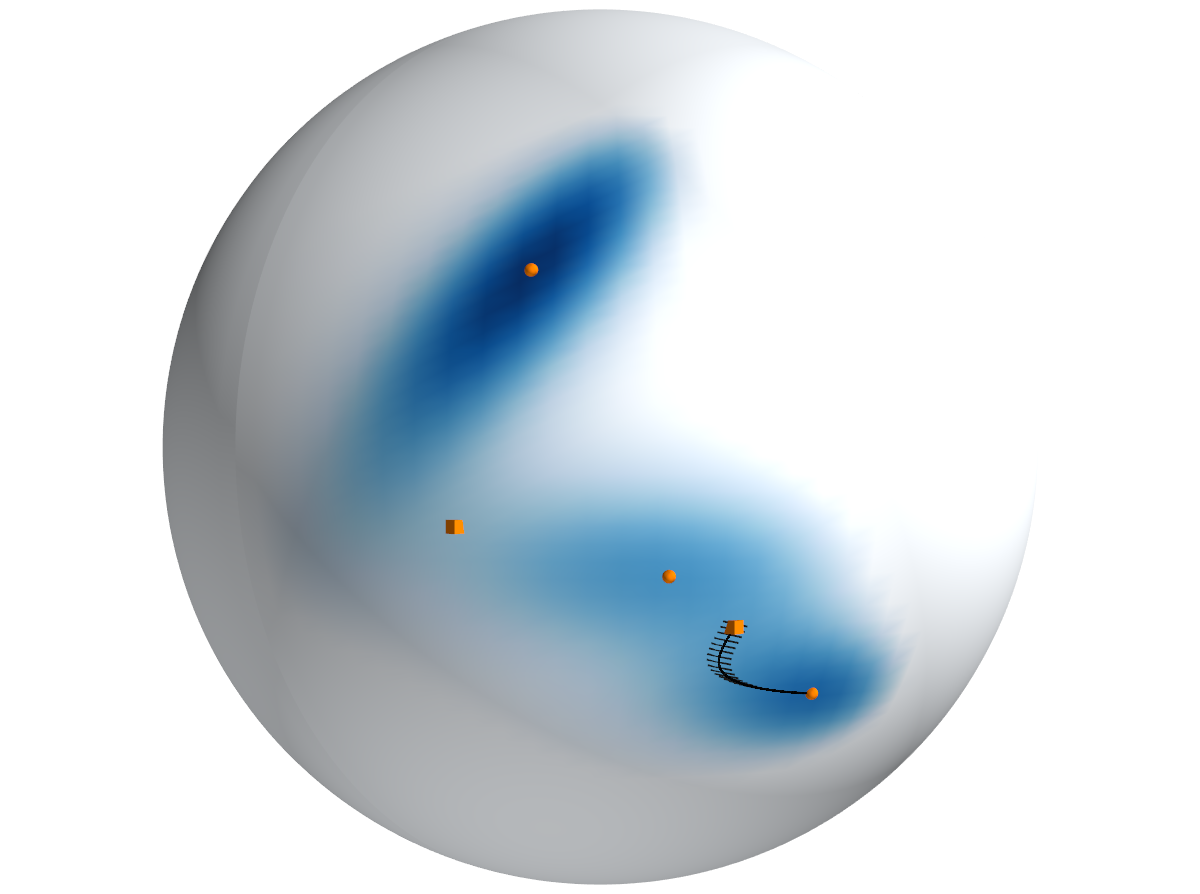}
    \caption{Sphere $\mathbb{S}^2$.}
  \end{subfigure}
  \hfill
  \begin{subfigure}[b]{0.4\textwidth}
    \includegraphics[width=\columnwidth,trim=0.75cm 0.76cm 0.75cm 0.75cm,clip]{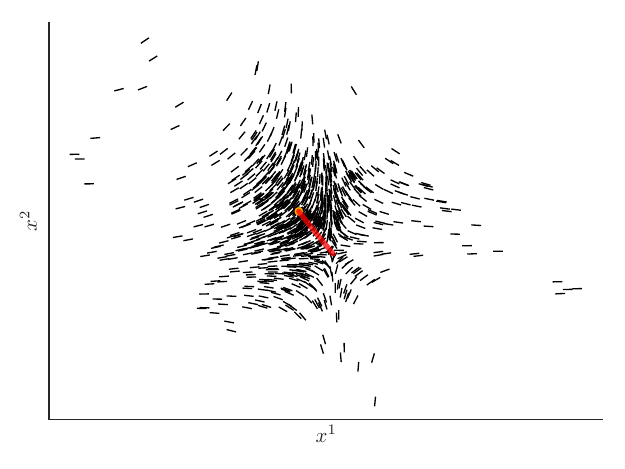}
    \caption{Local chart.}
  \end{subfigure}
  \caption{M\"uller potential on $\mathbb{S}^2$ (diffusion maps). Iteration \#102.
  }
  \label{fig:muller-sphere-dmaps-101}
\end{figure}

\subsection{Sporadic switch to steepest descent}

In a gradient system, where $X = -D E$ is the force associated to the potential energy function $E \in C^\infty(M)$, the trajectory $\gamma$ computed by \Cref{alg:staying-the-course} might take the system into high-energy regions, which could lead to numerical instability.
Under those circumstances, switching to steepest descent and converging to the local minimum in the current basin of attraction, still provides a useful way of exploring the energy landscape of the system.

\subsection{Limitations of the method}

Our method hinges on the success of its underlying building blocks at each step for it to converge to a fixed point.
Here, we enumerate the failure modes of each building block:
\begin{description}
\item [Manifold learning] The success of the chosen manifold learning algorithm is crucial: if it fails, then, of course, so does our algorithm.
  Appendix~\ref{sec:manifold-learning} details our use of the diffusion maps method.
\item [Sampling on manifolds] The curvature of the underlying manifold increases the difficulty of sampling the neighborhood of a given point.
\item [Estimation of the vector field] If the underlying vector field changes rapidly in space, then it will be harder to construct accurate estimates of it.
\item [Out of distribution (OOD) generalization] In our particular implementation, we rely on Gaussian process regression to be able to sample OOD points.
  The greater the diameter of the set of samples drawn in the vicinity of the given point, the better the GP regression will perform.
\item [Numerical integration] The absence of a corrector step (as in numerical continuation schemes) implies that the integration scheme must be accurate enough to remain close to the true isocline while it advances toward an equilibrium point.
\end{description}


\section{Concluding remarks}
\label{sec:conclusion}

We have introduced an algorithm for locating equilibria of vector fields on compact Riemannian manifolds that are defined by (iteratively sampled) point-clouds.
Our method works by finding a path along which the direction of the vector field remains parallel.
We have shown that the parallelism condition guarantees that such a path will end at an equilibrium.
When the manifolds are defined by point-clouds (instead of by the roots of smooth functions or by an atlas), our proposed scheme relies on sampling, manifold learning techniques (namely, diffusion maps), and Gaussian process regression to carry out the relevant differential-geometric computations and obtain the sought-after path that joins equilibria.
We have presented examples of our algorithm on manifolds with both constant positive and negative curvature on a model dynamical system commonly used in computational chemistry and in a system for which closed-form formulas exist (see \Cref{sec:simple-example-in-closed-form}).

We believe our approach is as parsimonious as possible: we do not know the manifold {\em a priori}, nor do we assume known collective variables; our search proceeds along a curve.
This should be contrasted to methods such as metadynamics, that sample the entire space of collective variables, and our own iMapD, which systematically explores entire basins of attraction in the space of collective variables.
Furthermore, a single initial point suffices to start our algorithm (as opposed to the string method and its variants, that need two endpoints).

A proof of concept code for our algorithm can be found at \url{https://github.com/jmbr/staying-the-course-code/}.


\backmatter

\section{Acknowledgments}

We are grateful to Andrew Duncan (Imperial College London, UK) for pointing us to the literature on reduced gradient following and to Tyrus Berry (George Mason University, USA) for valuable feedback on an early version of this manuscript.


\begin{appendices}

\section{A whirlwind tour of kernel-based manifold learning}
\label{sec:manifold-learning}

Let $M$ be a closed (\emph{i.e.,} compact and boundaryless) Riemannian manifold.
The heat kernel (\emph{i.e.,} the fundamental solution of the heat equation) of $M$ can be written as the spectral decomposition
\begin{equation}
  \label{eq:spectral-decomposition}
  k_\epsilon(x, y)
  =
  \sum_{n = 0}^\infty \e^{-\lambda_n \epsilon} \phi_n(x) \phi_n(y)
\end{equation}
where $x, y \in M$, $\epsilon \ge 0$, and $\lambda_n \in [0, \infty)$ are eigenvalues of the Laplace-Beltrami operator on $M$ with $\phi_n \in C^\infty(M)$ being the corresponding eigenfunctions for $n = 0, 1, \dotsc$
The eigenvalues are in non-decreasing order, $0 = \lambda_0 < \lambda_1 \le \lambda_2 \le \dotsb$.

The heat kernel allows us~\cite{berard1994} to embed $M$ in $\ell^2$ by the projection of a map $x \mapsto k_\epsilon(x, \cdot)$ onto the spectral basis of $L^2(M)$ given by $\phi_n$.
Indeed, given $x \in M$, we can regard~\eqref{eq:spectral-decomposition} as a Fourier expansion whose sequence of coefficients $(\e^{-\lambda_n \epsilon} \phi_n(x))_{n = 0}^\infty \in \ell^2$ yield the desired embedding.
If a spectral gap $\lambda_m \ll \lambda_{m+1}$ exists, then it justifies approximating the aforementioned embedding into $\ell^2$ by an embedding into the finite-dimensional vector space $\R^m$.

The diffusion maps algorithm~\cite{coifman2006} used in this paper is a practical method for computing the embedding we have discussed when the manifold is given by a point-cloud.
Note that we could have replaced diffusion maps by other manifold learning techniques such as kernel principal component analysis (KPCA)~\cite{scholkopf1997}, Iso\-map~\cite{tenenbaum2000}, Locally Linear Embedding~\cite{roweis2000,donoho2003}, Local Tangent Space Alignment~\cite{zhang2004}, Autoencoders~\cite{rumelhart1986,kingma2013,miolane2020}, etc. (some of the aforementioned manifold learning methods can be regarded as special cases of KPCA \cite{ham2004}).

Figure~\ref{fig:swiss-roll} shows an example of the application of the diffusion maps algorithm to a two-dimensional manifold, the Swiss roll, defined by a point-cloud in three-dimensional Euclidean space.
In this case, two eigenfunctions $(\phi_1, \phi_2)$ suffice for parameterizing the Swiss roll.
\begin{figure}[ht]
  \centering
    \begin{subfigure}[t]{0.495\columnwidth}
      {\includegraphics[width=0.9\columnwidth,trim=1cm 1cm 1cm 1cm,clip]{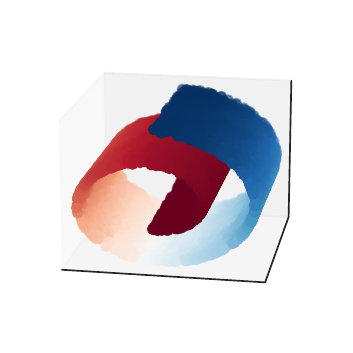}}
      \caption{Data set colored by the value of $\phi_1$.}
    \end{subfigure}
    \hfill
    \begin{subfigure}[t]{0.495\columnwidth}
      {\includegraphics[width=0.9\columnwidth,trim=1cm 1cm 1cm 1cm,clip]{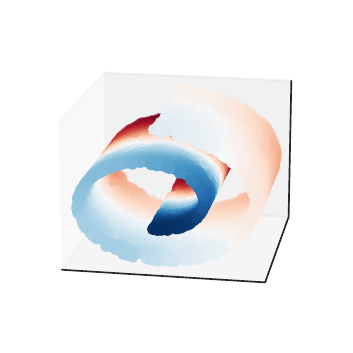}}
      \caption{Data set colored by the value of $\phi_2$.}
    \end{subfigure}
    \caption{Dimensionality reduction of an example data set known as the Swiss roll.
      The colors in each figure correspond to the first and second non-trivial eigenfunctions of the Laplace-Beltrami operator of the manifold on which the point-cloud was sampled.
    The two eigenfunctions together provide a (two-dimensional) system of coordinates for the Swiss roll manifold.}
  \label{fig:swiss-roll}
\end{figure}


\section{Simple example in closed form}
\label{sec:simple-example-in-closed-form}

In this section we compute in closed form the field of lines defining the isoclines of the simple vector field corresponding to the potential $E(x^1, x^2, x^3) = x^1 x^2 x^3$ defined on the sphere (see \Cref{fig:simple-potential}).
\begin{figure}[ht]
  \centering
  \includegraphics[width=0.4\columnwidth,trim=0.25cm 0.75cm 0.25cm 0.75cm,clip]{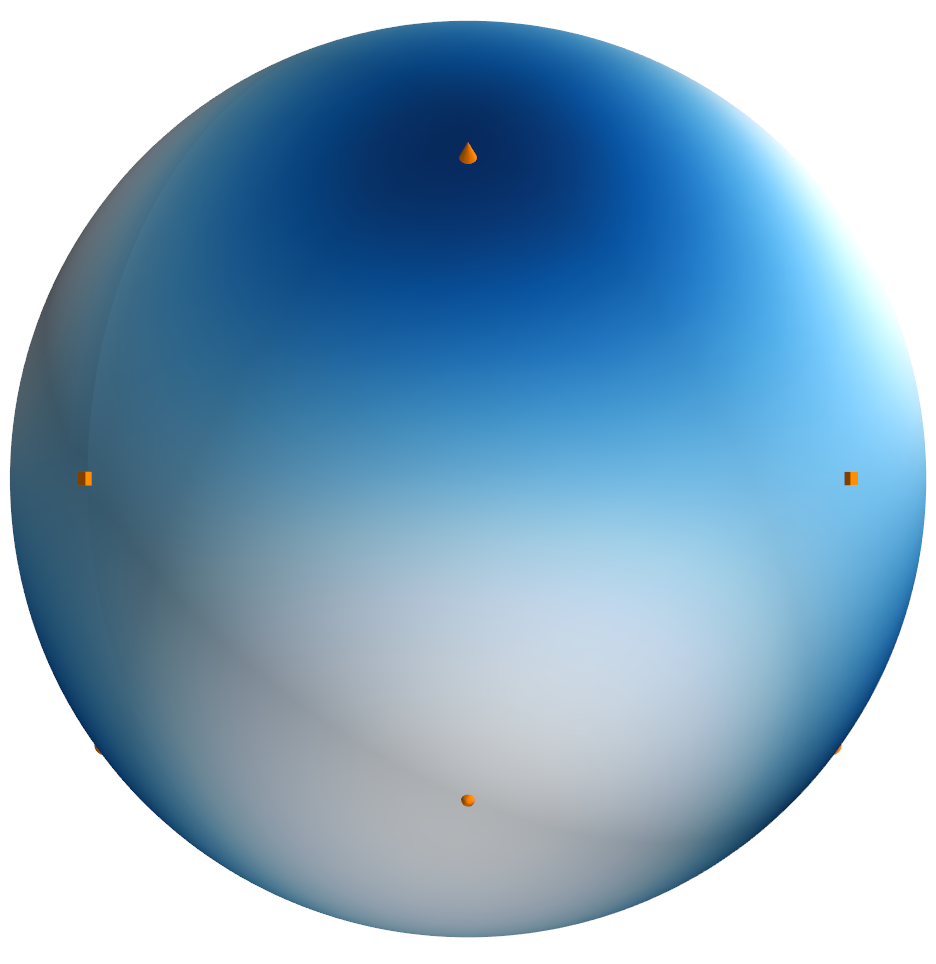}
  \caption{Potential $E(x^1, x^2, x^3) = x^1 x^2 x^3$ on $\S^2$ represented as a heat map.}
  \label{fig:simple-potential}
\end{figure}

\subsection{Stereographic projection}
\label{sec:stereographic-projection}

Consider the 2-sphere
\begin{equation*}
  \S^2
  =
  \{ (x^1, x^2, x^3) \in \R^3 \mid (x^1)^2 + (x^2)^2 + (x^3)^2 = 1 \}.
\end{equation*}
The stereographic projection provides us with the atlas $\{ (U_{+1}, \phi_{+1}), (U_{-1}, \phi_{-1}) \}$, where the systems of coordinates $\phi_{\pm 1} \colon U_{\pm 1} \subset M \to \R^2$ are given by
\begin{equation*}
  \phi_{\pm 1}(x) = (
  \frac{x^1}{1 \mp x^3}, \frac{x^2}{1 \mp x^3}
  ) \in \R^2,
\end{equation*}
Let $p = (p^1, p^2) \in \R^2$ and $q = (q^1, q^2) \in \R^2$.
The corresponding parameterizations are given by
\begin{equation*}
  \phi_{+1}^{-1}(p) = \frac{1}{\nu + 1} (2 p^1, 2 p^2, \nu - 1)
  \quad \text{and} \quad
  \phi_{-1}^{-1}(q) = \frac{1}{\chi + 1} (2 q^1, 2 q^2, -\chi + 1),
\end{equation*}
where $\nu = (p^1)^2 + (p^2)^2$ and $\chi = (q^1)^2 + (q^2)^2$.

Let $U_{\pm 1}^\prime = \phi_{\pm 1}(U_{\pm 1})$.
The transition maps between charts, $(\phi_{-1} \circ \phi_{+1}^{-1}) \colon U^\prime_{+1} \cap U^\prime_{-1} \to U^\prime_{-1}$ and $(\phi_{+1} \circ \phi_{-1}^{-1}) \colon U^\prime_{+1} \cap U^\prime_{-1} \to U^\prime_{+1}$, are defined by:
\begin{equation*}
  (\phi_{-1} \circ \phi_{+1}^{-1}) (p) = \frac{p}{\nu}
  \qquad \text{and} \qquad
  (\phi_{+1} \circ \phi_{-1}^{-1}) (q) = \frac{q}{\chi}.
\end{equation*}

For the sake of simplicity, we focus on the chart $U = U_{+1}$ with coordinates $\phi = \phi_{+1}$ and parameterization $\psi = \phi_{+1}^{-1}$.
In the chart $U$, the potential energy is given by
\begin{equation*}
  E(p^1, p^2)
  =
  4 p^1 p^2 \frac{\nu - 1}{\nu + 1}.
\end{equation*}

\subsection{Metric tensor and Christoffel symbols}

The metric tensor $g$ in the stereographic projection on the chart $U$ is given by
\begin{equation*}
  g
  =
  \frac{4}{\nu + 1} \left(
    \d p^1 \otimes \d p^1 + \d p^2 \otimes \d p^2
  \right).
\end{equation*}
The non-redundant Christoffel symbols are
\begin{equation*}
  \Gamma_{11}^1 =
  \Gamma_{12}^2 =
  -\Gamma_{22}^1 = -\frac{2 p^1}{\nu + 1}
  \quad \text{and} \quad
  -\Gamma_{11}^2 =
  \Gamma_{12}^1 =
  \Gamma_{22}^2 = -\frac{2 p^2}{\nu + 1}.
\end{equation*}

\subsection{A vector field and its associated line field}

The vector field corresponding to the potential $E$ in the chart $U$ is given by $X = X^1 \frac{\partial}{\partial p^1} + X^2 \frac{\partial}{\partial p^2}$, where
\begin{align*}
  X^1
  =
  \frac{4 p^2}{(\nu + 1)^{4}} \,
  \left( 3 (p^1)^4+2 (p^1)^2 (p^2)^2-(p^2)^4-8 (p^1)^2 + 1 \right)
\end{align*}
and
\begin{equation*}
  X^2
  =
  -\frac{4 p^1}{(\nu + 1)^{4}} \,
  \left((p^1)^4-2 (p^1)^2 (p^2)^2-3 (p^2)^4+8 (p^2)^2 - 1 \right)
\end{equation*}
The norm $\sqrt{g(X, X)}$, given by
\begin{align*}
  \tfrac{1}{8}
  (\nu + 1)^5
  \sqrt{g(X, X)}
  &=
    \left((p^1)^{10}+5 (p^1)^{8}(p^2)^{2}+10 (p^1)^{6}(p^2)^{4}+10 (p^1)^{4}(p^2)^{6} \right.\\
  &+5 (p^1)^{2}(p^2)^{8} + (p^2)^{10} -32 (p^1)^{6}(p^2)^{2}-64 (p^1)^{4}(p^2)^{4} \\
  &-32 (p^1)^{2}(p^2)^{6}-2 (p^1)^{6} +74 (p^1)^{4}(p^2)^{2} +74 (p^1)^{2}(p^2)^{4} \\
  &-2  \left. (p^2)^{6} -32 (p^1)^{2}(p^2)^{2} +(p^1)^{2}+(p^2)^{2} \right)^{1/2},
\end{align*}
allows us to compute the normalized vector field $Y = X / \sqrt{g(X, X)}$.

The matrix $A(Y) \in \R^{2 \times 2}$ has components
\begin{align*}
  a_{11}
  &= \frac{(\nu + 1) ((X^2)^2 \frac{\partial X^1}{\partial p^1} - X^1 X^2 \frac{\partial X^2}{\partial p^1}) - 2 (X^1)^2 X^2 p^2 - 2 (X^2)^3 p^2}{2 \zeta^{3/2}} \\
  a_{12}
  &= \frac{(\nu + 1) ((X^2)^2 \frac{\partial X^1}{\partial p^2} - X^1 X^2 \frac{\partial X^2}{\partial p^2}) + 2 (X^1)^2 X^2 p^1 + 2 (X^2)^3 p^1}{2 \zeta^{3/2}} \\
  a_{21}
  &= \frac{(\nu + 1)( (X^1)^2 \frac{\partial X^2}{\partial p^1} - X^1 X^2 \frac{\partial X^1}{\partial p^1}) + 2 (X^1)^3 p^2 + 2 X^1 (X^2)^2 p^2}{2 \zeta^{3/2}} \\
  a_{22}
  &= \frac{(\nu + 1)((X^1)^2 \frac{\partial X^2}{\partial p^2} - X^1 X^2 \frac{\partial X^1}{\partial p^2}) - 2 (X^1)^3 p^1 - 2 X^1 (X^2)^2 p^1}{2 \zeta^{3/2}},
\end{align*}
where $\zeta = (X^1)^2 + (X^2)^2$.

It is straight-forward to verify that the field of lines associated with $\dot{\gamma}$ in $\nabla_{\dot{\gamma}} Y = 0$ or, equivalently, the nullspace of $A(Y)$, is spanned by the vector
$L = L^1 \frac{\partial}{\partial p^1} + L^2 \frac{\partial}{\partial p^2}$, where the components
\begin{equation*}
  L^1
  =  (\nu + 1) \left( X^1 \frac{\partial X^2}{\partial p^2} - X^2 \frac{\partial X^1}{\partial p^2} \right) - 2 \zeta p^1
\end{equation*}
and
\begin{equation*}
  L^2
  = (\nu + 1) \left( X^2 \frac{\partial X^1}{\partial p^1} - X^1 \frac{\partial X^2}{\partial p^1} \right) - 2 \zeta p^2
\end{equation*}
are polynomials in $p^1$ and $p^2$ of degree 11 (recall that $L$ is defined up to a non-zero scalar factor).
Indeed,
\begin{align*}
  L^1
  &= (p^1)^{11}+5 (p^1)^{9}(p^2)^{2}+10 (p^1)^{7}(p^2)^{4}+10 (p^1)^{5}(p^2)^{6}+5 (p^1)^{3}(p^2)^{8}+p^1 (p^2)^{10} \\
  &-5 (p^1)^{9}+28 (p^1)^{7}(p^2)^{2}+50 (p^1)^{5}(p^2)^{4}-4 (p^1)^{3}(p^2)^{6}-21 p^1 (p^2)^{8}-6 (p^1)^{7} \\
  &-130 (p^1)^{5}(p^2)^{2}-130 (p^1)^{3}(p^2)^{4}-6 p^1 (p^2)^{6}+6 (p^1)^{5}+124 (p^1)^{3}(p^2)^{2} \\
  &+6 p^1 (p^2)^{4} +5 (p^1)^{3}-11 p^1 (p^2)^{2}-p^1
\end{align*}
and
\begin{align*}
  L^2
  &=
    (p^1)^{10}p^2+5\,(p^1)^{8}(p^2)^{3}+10\,(p^1) ^{6}(p^2)^{5}+10\,(p^1)^{4}(p^2)^{7}+5\,(p^1)^{2}(p^2)^{9} \\
  &+(p^2)^{11} -21\,(p^1)^{8}p^2-4\,(p^1) ^{6}(p^2)^{3}+50\,(p^1)^{4}(p^2)^{5}+28\,(p^1)^{2} (p^2)^{7}-5\,(p^2)^{9} \\
  &-6\,(p^1)^{6}p^2 -130\,(p^1)^{4}(p^2)^{3}-130\,(p^1)^{2}(p^2)^{5}-6\,(p^2) ^{7}+6\,(p^1)^{4}p^2 \\
  &+124\,(p^1)^{2}(p^2)^{3}+6\,(p^2)^{5} -11\,(p^1)^{2}p^2+5\,(p^2)^{3}-p^2.
\end{align*}

\begin{figure}[ht]
  \centering
  \begin{subfigure}[t]{0.475\textwidth}
    \includegraphics[width=\columnwidth,trim=0cm 0cm 1cm 1.25cm,clip]{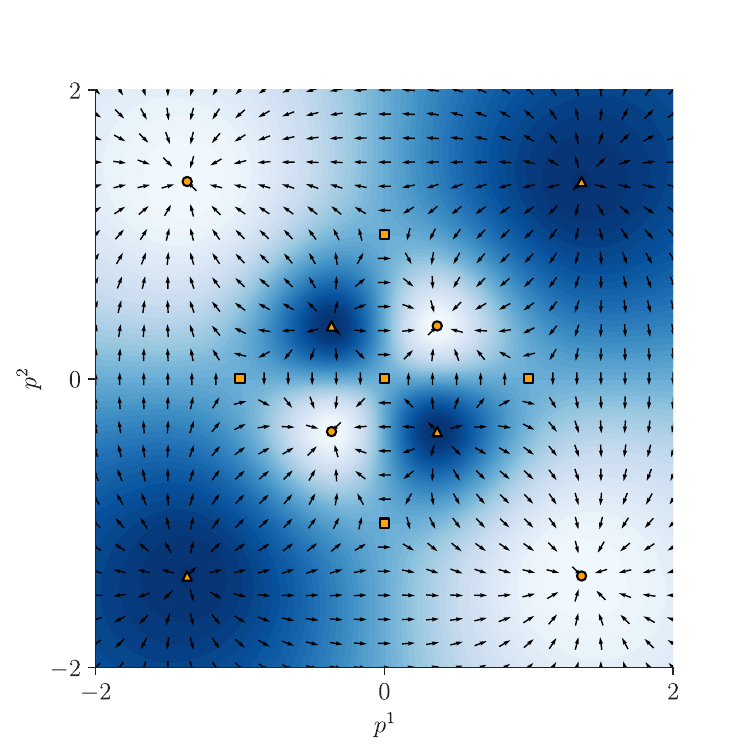}
    \caption{Directions of the steepest descent vector field $X$ in the chart $U$.}
  \end{subfigure}
  \hfill
  \begin{subfigure}[t]{0.475\textwidth}
    \includegraphics[width=\columnwidth,trim=0cm 0cm 1cm 1.25cm,clip]{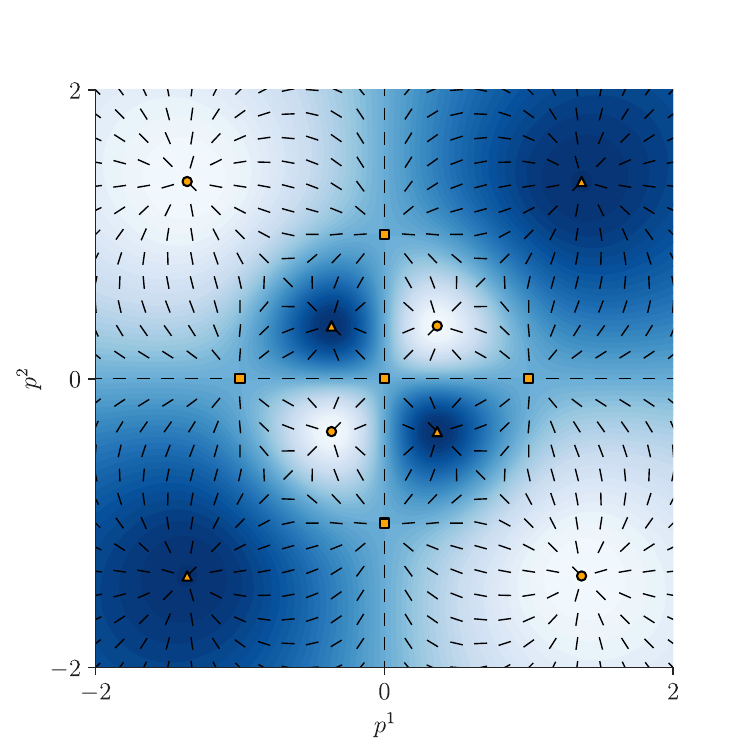}
    \caption{Field of lines determined by the steepest descent vector field $X$ in the chart $U$.
      Generalized isoclines are curves whose tangents are parallel to the field of lines.}
  \end{subfigure}
  \caption{Potential $E(x^1, x^2, x^3) = x^1 x^2 x^3$ on the stereographic projection and its associated vector field and field of lines.
    Sinks are represented by (\textcolor{orange}{$\bullet$}), sources are represented by (\textcolor{orange}{$\blacktriangle$}), and saddles are represented by (\textcolor{orange}{$\blacksquare$}).}
\end{figure}

Once we have characterized the unique field of lines $L$ induced by $X$, we are ready to compute generalized isoclines of $X$.
We end this section by illustrating (see \Cref{fig:closed-isoclines}) the existence of closed isoclines and how the step length in their numerical integration can affect the results.

\begin{figure}[ht]
  \centering
  \begin{subfigure}[t]{0.475\textwidth}
    \includegraphics[width=\columnwidth,trim=0cm 0cm 1cm 1.25cm,clip]{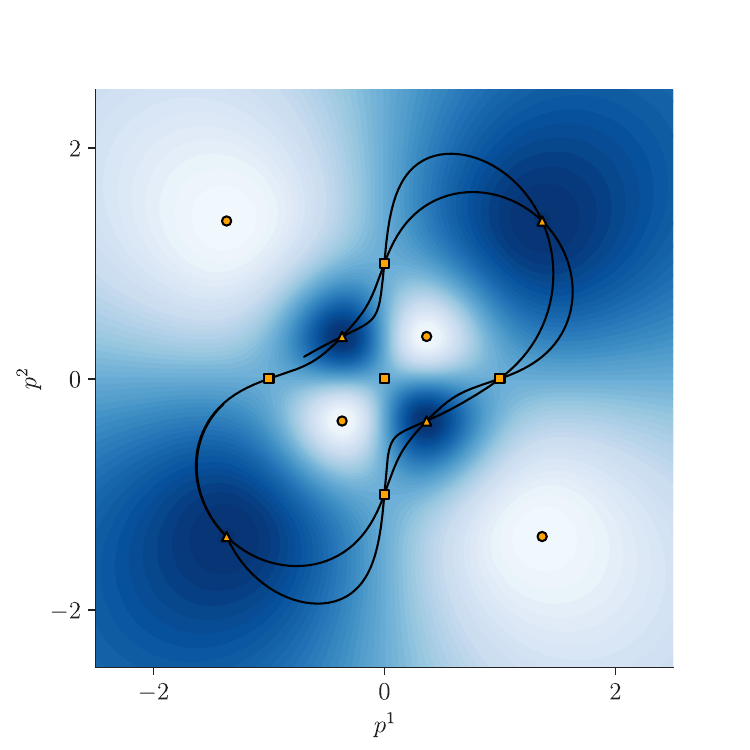}
    \caption{Isocline computed with $\tau = 10^{-3}$ and $2 \times 10^4$ steps.}
  \end{subfigure}
  \hfill
  \begin{subfigure}[t]{0.475\textwidth}
    \includegraphics[width=\columnwidth,trim=0cm 0cm 1cm 1.25cm,clip]{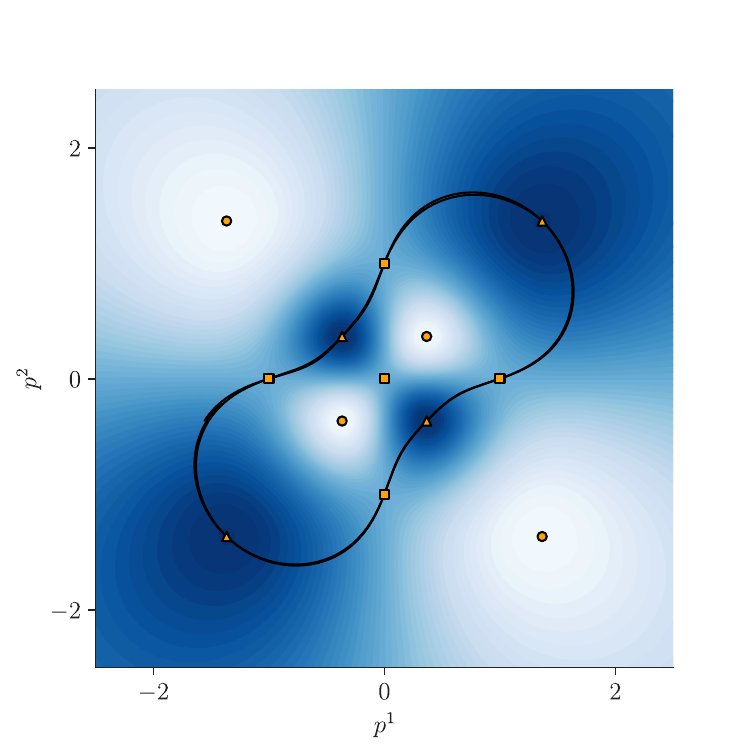}
    \caption{Isocline computed with $\tau = 10^{-4}$ and $2 \times 10^5$ steps.}
  \end{subfigure}
  \caption{Two numerically computed isoclines of the potential $E(x^1, x^2, x^3) = x^1 x^2 x^3$ on $\S^2$, integrated with the same initial conditions and having the same length but differing by the step length $\tau$ used in their numerical calculation.
    Note that the distortion in the coarsely obtained isocline decreases with a finer integration step.}
  \label{fig:closed-isoclines}
\end{figure}


\section{Three perspectives on the parallel transport equation}
\label{sec:three-views-parallel-transport}

Let $\gamma \colon [0, T] \to M$ be a smooth curve on the manifold $M$ such that $\gamma(0) = p \in M$ and let $X \in \X(M)$ be a smooth vector field on $M$.
Depending on what we regard as the unknown in the parallel transport equation $\nabla_{\dot{\gamma}} X = 0$, we have three different types of equations (the third one being the least common and the one used in this paper).
Namely,
\begin{itemize}
\item $\nabla_{\dot{\gamma}} \dot{\gamma} = 0$, where $\gamma$ is the unknown.
  In local coordinates, it is straight-forward to see that this is a system of second order ordinary differential equations (ODEs) with initial condition $(\gamma(0), \dot{\gamma}(0)) = (p, v) \in T_{p} M$.
  The geodesics of $M$ are precisely the curves $\gamma$ that satisfy this equation.
\item $\nabla_{\dot{\gamma}} X = 0$, where $X$ is the unknown.
  This is a system of first order ODEs where the initial condition is $X(\gamma(0)) = X(p) \in T_p M$.
  The solution is a vector field $X(\gamma(s)) \in T_{\gamma(s)} M$ defined along the points of the known curve $\gamma(s)$.
  This vector field is the parallel transport of the initial vector $X(p)$ along $\gamma$.
\item $\nabla_{\dot{\gamma}} X = 0$, where $\gamma$ is the unknown.
  This yields a linear (algebraic) equation for $\dot{\gamma}$, which in turn allows us to pose an ODE for $\gamma$ and subsequently solve it.
  The solution is a curve joining the points of $M$ for which the known vector $X$ is parallel-transported along $\gamma$.
\end{itemize}


\end{appendices}

\section*{Declarations}

\noindent\textbf{Ethical approval:} Not applicable.

\noindent\textbf{Competing interests:} The authors declare no competing interests.

\noindent\textbf{Authors' contributions:} \emph{Juan M. Bello-Rivas:} Conceptualization, Methodology, Software, Validation, Formal analysis, Investigation, Writing -- Original Draft, Writing -- Review \& Editing, Visualization.
\emph{Anastasia Georgiou:} Validation, Writing -- Review \& Editing.
\emph{John Guckenheimer:} Conceptualization, Validation, Formal analysis, Writing -- Review \& Editing.
\emph{Ioannis G. Kevrekidis:} Conceptualization, Methodology, Validation, Formal analysis, Writing -- Review \& Editing, Supervision, Project administration, Funding acquisition.

\noindent\textbf{Funding:} This work was partially supported by DARPA, an AFOSR MURI, and the US Department of Energy.

\noindent\textbf{Availability of data and materials:} Proof of concept code for the algorithm presented in the paper is available at \url{https://github.com/jmbr/staying-the-course-code}.


\bibliography{bibliography}

\end{document}